\documentclass{article} 
\usepackage{iclr2027_conference,times}

\usepackage{amsmath,amsfonts,bm}

\def\eqref#1{Eq.~(\ref{#1})}

\def\1{\bm{1}}

\DeclareMathAlphabet{\mathsfit}{\encodingdefault}{\sfdefault}{m}{sl}
\SetMathAlphabet{\mathsfit}{bold}{\encodingdefault}{\sfdefault}{bx}{n}

\usepackage{hyperref}
\usepackage{url}
\usepackage[table]{xcolor}   
\usepackage[skins]{tcolorbox}
\usepackage{textcomp}

\definecolor{promptbg}{RGB}{248,248,248}
\definecolor{promptrule}{RGB}{100,100,100}
\definecolor{promptgold}{RGB}{176,128,16}
\definecolor{promptthink}{RGB}{63,139,65}
\definecolor{promptaction}{RGB}{177,44,115}
\newtcolorbox{prompttpl}[2][]{
  enhanced, colback=promptbg, colframe=promptrule,
  colbacktitle=promptrule, coltitle=white,
  boxrule=0.5pt, arc=5pt,
  left=8pt, right=8pt, top=6pt, bottom=6pt,
  toptitle=3pt, bottomtitle=3pt,
  fonttitle=\bfseries\normalsize, fontupper=\normalsize\rmfamily,
  before skip=0pt, after skip=0pt, title={#2}, #1}
\newcommand{\promptvar}[1]{\textcolor{promptgold}{\{#1\}}}
\newcommand{\promptthinktags}{\textcolor{promptthink}{\textless think\textgreater\ \textless/think\textgreater}}
\newcommand{\promptactiontags}{\textcolor{promptaction}{\textless action\textgreater\ \textless/action\textgreater}}
\usepackage{graphicx}        
\usepackage{subcaption}      
\usepackage{booktabs}        
\usepackage{multirow}        
\usepackage{algorithm}
\usepackage{algpseudocode}

\newcommand{\std}[1]{\ensuremath{_{\pm #1}}}
\usepackage{amsmath}    
\usepackage{amssymb}    
\usepackage{amsthm}     
\newtheorem{proposition}{Proposition}[section]
\newtheorem{lemma}[proposition]{Lemma} 
\usepackage{enumitem}
\usepackage{wrapfig}
\newcommand{\xu}[1]{\textcolor{black}{#1}}

\title{GraphHCA: Closed-Form Hindsight Credit Assignment for Long-Horizon LLM Agents}

\author{%
  \textbf{Haodong Zhu}$^{1,2}$\thanks{Equal contribution.} \quad
  \textbf{Yangyang Ren}$^{1,2}$\footnotemark[1] \quad
  \textbf{Changbai Li}$^{1}$ \quad
  \textbf{Sheng Xu}$^{3}$\thanks{Corresponding authors:
    Sheng Xu (\texttt{shengxu@cuc.edu.cn}).} \quad
  \textbf{Linlin Yang}$^{3}$ \\
  \textbf{Haiguang Liu}$^{2}$ \quad
  \textbf{Baochang Zhang}$^{1,4}$ \\[0.5em]
  $^{1}$Beihang University \quad
  $^{2}$Zhongguancun Academy \quad
  $^{3}$Communication University of China \\
  $^{4}$Hangzhou Innovation Institute of Beihang University
}

\iclrfinalcopy 

\begin{document}

\maketitle

\ificlrfinal
  \lhead{}
\fi

\begin{abstract}
Group-based reinforcement learning (RL) has advanced large language models (LLMs) and is increasingly extending to agentic tasks, where sparse terminal rewards make step-level credit assignment essential. Existing methods assign credit from what follows an action in sampled rollouts, but do not explicitly capture its retrospective relation to the realized outcome. Hindsight credit assignment (HCA) instead attributes credit through the ratio of hindsight to behavior-policy probabilities, but estimating the hindsight distribution requires an auxiliary model or an extra pass.
To address this estimation bottleneck, we propose \textbf{GraphHCA}, a model-free realization of HCA that eliminates explicit hindsight-distribution estimation. 
For terminal-goal tasks with deterministic transitions, Bayes' rule reduces the hindsight ratio to a ratio of behavior-policy success probabilities at consecutive states. 
Taking logs yields a state-wise \emph{success potential}, whose increment across a transition provides step-level credit.
GraphHCA estimates this potential from pooled rollouts through a discounted recursion on the induced transition graph, which admits a unique fixed point on any directed graph. 
The resulting step-level signal is combined with the trajectory-level advantage, requiring neither a learned hindsight model nor an extra forward pass and recovering GRPO when the step-level weight is zero.
Among all compared baselines, {GraphHCA} achieves state-of-the-art results on ALFWorld and WebShop at both LLM scales, and on Sokoban with a vision-language agent. 
For example, on ALFWorld it improves overall success rate by up to 24.6 points over GRPO and by up to 4.7 points over the strongest step-level baseline.
\end{abstract}
\section{Introduction}

Large language models (LLMs) are increasingly deployed as controllers of interactive agents~\citep{openai2024gpt4technicalreport,geminiteam2025geminifamilyhighlycapable,guo2025deepseek}, performing embodied household tasks~\citep{shridhar2020alfworld}, navigating web and mobile environments~\citep{yao2022webshop,rawles2023androidinthewild}, and invoking external tools for complex reasoning~\citep{yao2023react,shinn2024reflexion,schick2023toolformer}.
These tasks typically require long-horizon interaction and sequential decision making over large natural-language action spaces.

A fundamental challenge in applying reinforcement learning (RL)~\citep{reinforcement-learning,sutton1999policy} to such agents is the sparsity of outcome-based rewards.
Many agentic tasks provide only a scalar reward at the end of an episode, leaving intermediate actions without direct supervision.
This gives rise to the \textit{credit assignment} problem: determining how a terminal outcome should be attributed to the individual decisions that produced it.
The problem becomes increasingly difficult as interaction horizons and action spaces grow.

Conventional actor--critic methods address credit assignment through learned value estimates~\citep{schulman2017proximal}, but training a critic at LLM scale is computationally expensive and particularly challenging under sparse rewards.
Group-based methods instead sample a group of rollouts for each task, as illustrated in Fig.~\ref{fig:head}(a), and estimate advantages directly from their outcomes without an explicit critic.
%
%
GRPO~\citep{grpo}, for example, assigns the same trajectory-level advantage to all actions within a rollout, so intermediate decisions are not distinguished by their individual contributions to the final outcome.
This coarse attribution has motivated step-level variants: GiGPO~\citep{gigpo} compares transitions from shared states using the outcomes of their respective rollouts, whereas GraphGPO~\citep{graphgpo} scores transitions by shortest-path progress in the pooled rollout graph.
Although these methods provide finer-grained signals, their credit is still derived primarily from forward evidence contained in sampled continuations, without explicitly modeling the retrospective relation between an intermediate action and the realized outcome.
As illustrated in Fig.~\ref{fig:head}(c), a failed rollout may still contain useful intermediate decisions, while a successful rollout may include redundant or erroneous ones; likewise, shortest-path progress need not identify the continuation with the highest empirical expected success.
\xu{HCA~\citep{hca} attributes credit retrospectively through the hindsight-to-policy ratio, but estimating the hindsight distribution is expensive: it requires either an auxiliary model or, as in HCAPO~\citep{hcapo}, an extra outcome-conditioned policy pass.}
%
%
Fig.~\ref{fig:head}(b) shows that the additional model pass of HCAPO raises its credit-assignment time to about $10$ seconds, the highest among all methods.
%
\xu{We therefore face a dilemma: step-level credit is either cheap but only forward-looking (Fig.~\ref{fig:head}(c)), or retrospective but too expensive to obtain (Fig.~\ref{fig:head}(b)).}

\begin{figure}[!t]
    \centering
    \includegraphics[width=1.0\linewidth]{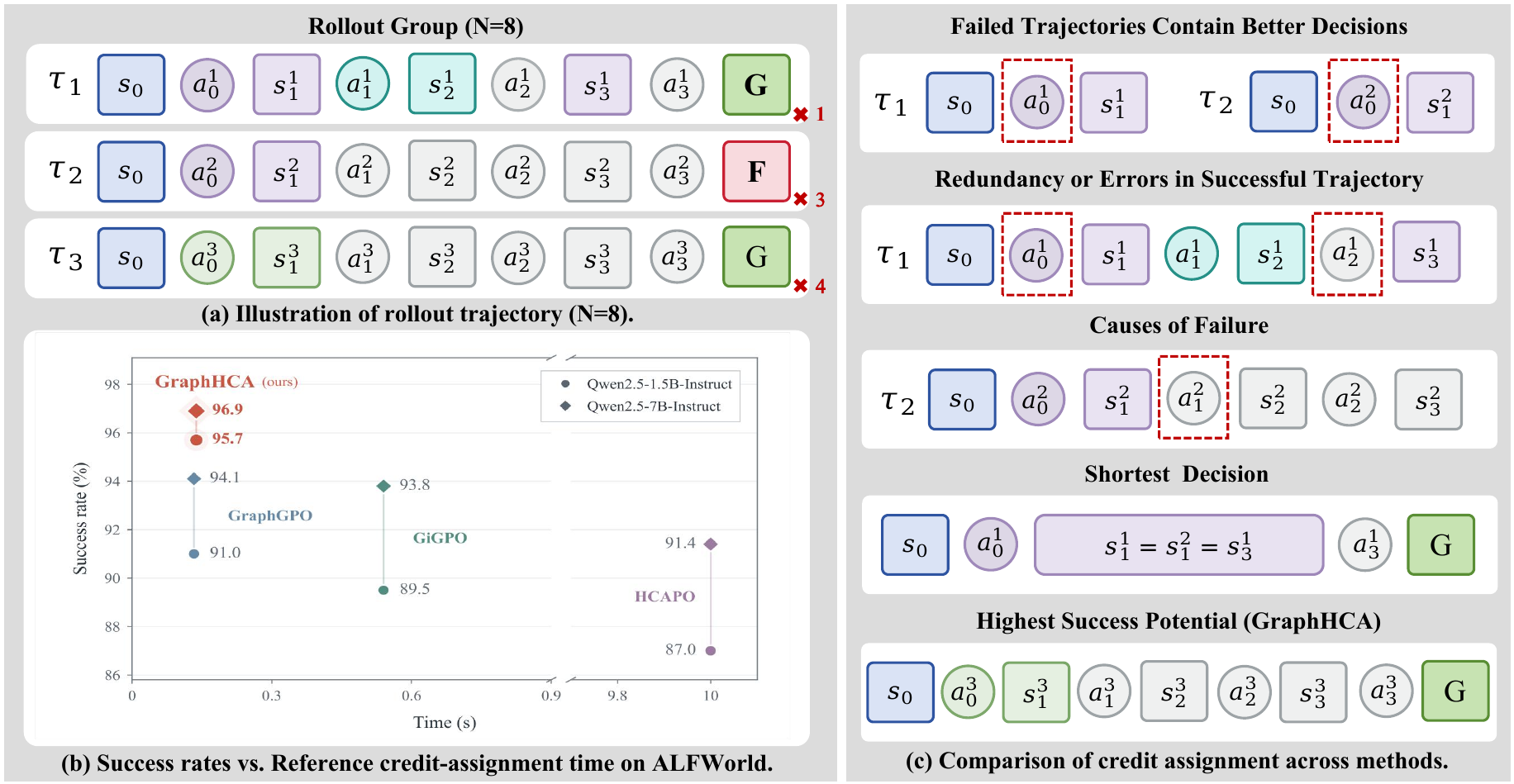}
    \caption{\textbf{Comparison of credit-assignment principles.} 
    (a) Illustration of rollout trajectory (N=8). Squares and circles denote states and actions; matching non-gray state colors indicate identical states. 
    (b)  ALFWorld success rates versus reference credit-assignment time. 
    (c) Comparison of credit assignment across methods, highlighting the limitations of relying solely on trajectory outcomes or shortest paths.}
    \label{fig:head}
    \vspace{-15pt}
\end{figure}


\textbf{Our key observation is that, in terminal-goal tasks with deterministic transitions, the hindsight ratio admits a closed-form expression, \xu{so it need not be estimated by an auxiliary model or an extra forward pass.}}
By Bayes' rule, it reduces to the ratio of behavior-policy success probabilities at the successor and current states, eliminating the need to model the hindsight distribution explicitly \xu{and thus avoiding the outcome-conditioned pass that makes HCAPO's credit-assignment time the highest in Fig.~\ref{fig:head}(b).}
We define the logarithm of this state-wise success probability as the \emph{success potential}, whose change across a transition corresponds to the log hindsight ratio and serves as a dense step-level credit signal.
Hindsight credit assignment is thus reduced from estimating a distribution over the combinatorial action space to estimating a single scalar per state.

\xu{Having reduced hindsight credit to estimating a single state-wise scalar, the remaining challenge is to estimate that scalar from a finite group of sampled rollouts.}
We therefore propose \textbf{GraphHCA}, a model-free realization of HCA that estimates the success potential directly from sampled rollouts, \xu{as illustrated in Fig.~\ref{fig:graphhca-overview}.}
Because each rollout observes only one sampled continuation at a visited state, GraphHCA pools a group of rollouts into a shared transition graph, merges identical states, and propagates terminal outcomes according to empirical action frequencies.
To recover the path-length information lost through state merging, we introduce a propagation discount, which also makes the resulting expectation backup contractive and guarantees a unique fixed point even on cyclic graphs.
The resulting potential differences provide transition-level credit, which is standardized among actions leaving the same state and combined with the trajectory-level advantage for policy optimization.
GraphHCA thus extracts dense hindsight credit from rollouts already collected for training, without learning an auxiliary hindsight model or requiring an additional outcome-conditioned forward pass, and reduces to GRPO when the step-level term is disabled.
Our contributions are summarized as follows:

\begin{itemize}

\item \textbf{Closed-form hindsight credit.}
We show that, for terminal-goal tasks with deterministic transitions, HCA reduces to state-wise success probabilities, replacing hindsight-distribution estimation with a single scalar per state.

\item \textbf{Model-free graph estimation.}
We estimate this scalar directly from pooled rollouts through a discounted expectation recursion on the transition graph, with a unique fixed point on arbitrary directed graphs, \xu{and requires no learned critic, auxiliary hindsight model, or extra forward pass.}

\item \textbf{Consistent empirical gains.}
\xu{Across ALFWorld~\citep{shridhar2020alfworld}, WebShop~\citep{yao2022webshop}, and $6\times6$ Sokoban~\citep{SchraderSokoban2018}, GraphHCA achieves the best performance among all compared baselines across both LLM scales and the vision-language setting. For example, GraphHCA improves success by up to 24.6 points over GRPO and by {up to 4.7} points over the strongest step-level baseline on ALFWorld.}
%

\end{itemize}

\begin{figure}[!t]
    \centering
    \includegraphics[width=1.0\linewidth]{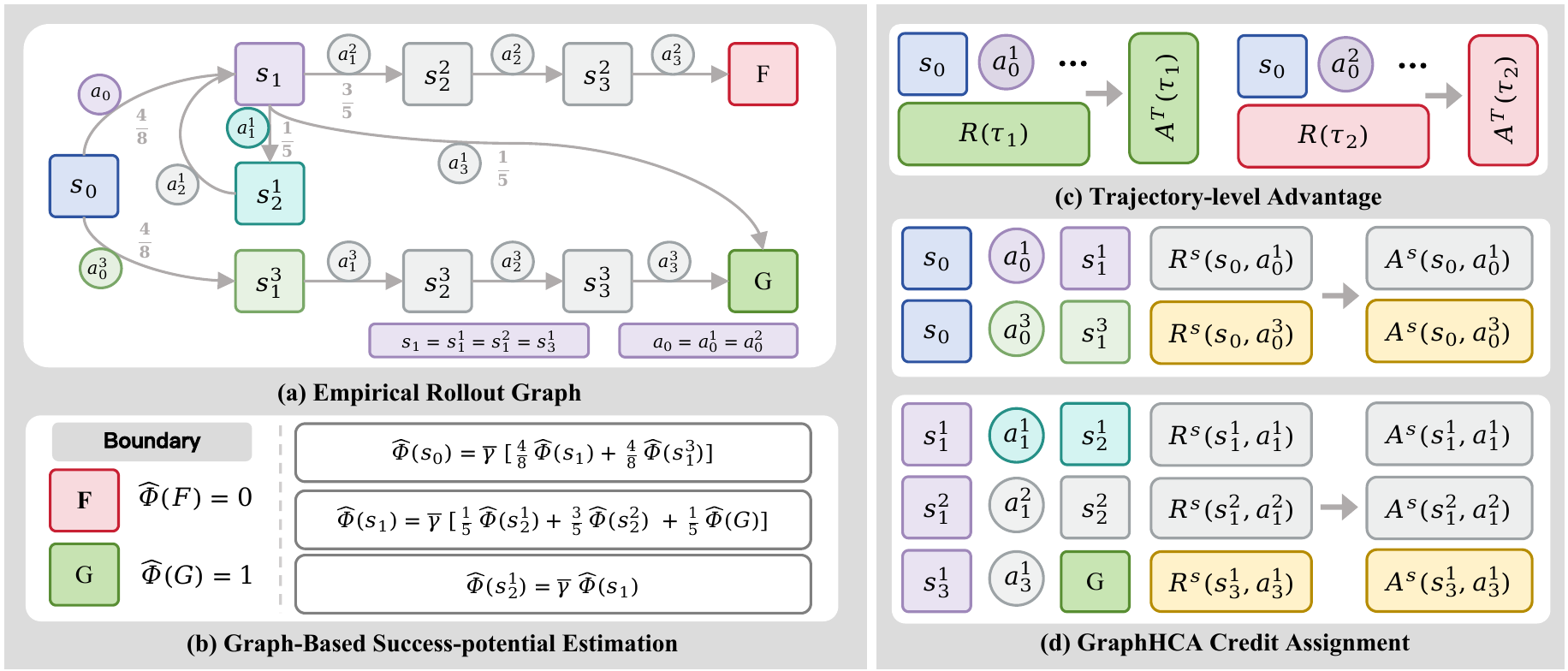}
    \caption{\textbf{Overview of GraphHCA.}
    \textbf{(a)} Rollouts are merged into a graph where identical states share a node and edges carry action frequencies.
    \textbf{(b)} The success  probability $\hat\Phi$ is the fixed point of a discounted recursion with $\hat\Phi(F)=0$ and $\hat\Phi(G)=1$.
    \textbf{(c)} The trajectory-level advantage $A^{T}(\tau)$ is shared by all steps of $\tau$.
    \textbf{(d)} GraphHCA takes the increment of $\log\hat\Phi$ as the step reward $R^{s}$ and standardizes it among transitions sharing a source state to obtain $A^{s}$.}
    \label{fig:graphhca-overview}
    \vspace{-10pt}
\end{figure}

\section{Related Work}
\textbf{RL for LLM agents.}
Reinforcement learning has become a central paradigm for LLM post-training, progressing from RL from human feedback~\citep{ziegler2019fine,stiennon2020learning,ouyang2022training,bai2022training,rafailov2023direct} to RL with verifiable rewards~\citep{guo2025deepseek,team2025kimi,grpo}, with automatically checkable feedback supporting reasoning, code, search, and tool use~\citep{le2022coderl,jin2025search,qian2025toolrl}.
For long-horizon LLM agents in interactive environments~\citep{yao2023react,shinn2024reflexion,schick2023toolformer,shridhar2020alfworld,yao2022webshop,rawles2023androidinthewild,yang2024swe,trivedi2024appworld}, early approaches often relied on learned critics, including PPO-style and agent-specific hierarchical or search-augmented methods~\citep{schulman2017proximal,peng2019advantage,zhou2024archer,putta2024agent,bai2024digirl}.
At LLM scale, however, critic training is costly and unstable under sparse terminal rewards, motivating critic-free advantage estimation from sampled rollouts.

\textbf{Step-level credit assignment.}
Group-based methods replace learned critics with rollout statistics, but GRPO~\citep{grpo} broadcasts a single trajectory-level advantage across steps.
Efforts to refine credit follow two main directions.
Dense shaping adds stepwise signals: process reward models~\citep{lightman2023let} require extra supervision, intrinsic rewards~\citep{empg} rely on surrogate objectives, and potential-based shaping~\citep{ng1999policy} specifies a policy-invariant reward form but leaves the potential unspecified.
Rollout-based methods use forward evidence from sampled continuations: GiGPO~\citep{gigpo} compares transitions from shared states, while GraphGPO~\citep{graphgpo} scores shortest-path progress on pooled rollout graphs.
Hindsight credit assignment (HCA)~\citep{hca} instead formalizes outcome-conditioned credit through the hindsight-to-policy ratio, but estimating the hindsight distribution typically requires an auxiliary model or an extra outcome-conditioned policy pass~\citep{hcapo}.
GraphHCA removes this bottleneck by deriving the ratio in closed form for deterministic terminal-goal tasks and estimating the resulting state-wise success potential directly from pooled rollouts.

\section{Preliminaries}
\label{sec:prelim}

\paragraph{RL for long-horizon LLM agents.}
We model an interactive agent task as a finite-horizon decision process.
Let $x$ denote the task instruction and $s_t$ the environment state at step $t$.
At each step, the LLM policy $\pi_\theta$ samples a natural-language action
$a_t\sim\pi_\theta(\cdot\mid s_t,x)$, after which the environment transitions to $s_{t+1}$.
A rollout yields $\tau=(s_0,a_0,\dots,a_{T-1},s_T)$ under the behavior policy
$q(a\mid s,x)=\pi_{\theta_{\mathrm{old}}}(a\mid s,x)$ and receives the terminal reward
$R(\tau)=\mathbf 1[s_T\in\mathcal G]$, where $\mathcal G$ and $\mathcal F$ denote the goal and failure sets.
Since all rollouts within a group share the same task $x$, we suppress this common conditioning below and write
$q(a\mid s)$ and $\pi_\theta(a\mid s)$ for brevity.
Maximizing $J(\theta)=\mathbb E_{\tau\sim\pi_\theta}[R(\tau)]$ gives the policy gradient
$
\nabla_\theta J(\theta)=\mathbb E_{\tau\sim\pi_\theta}\!\left[\sum\nolimits_{t=0}^{T-1}A_t\,\nabla_\theta\log\pi_\theta(a_t\mid s_t)\right],
\label{eq:pre-pg}
$
which requires a per-step advantage $A_t$, whereas the terminal reward reveals only whether the task succeeded, not which intermediate decisions should receive credit for the outcome.

\paragraph{Group-based advantage estimation.}
Classically, $A_t$ is estimated by a learned critic~\citep{schulman2017proximal}, which is costly and can be unstable at LLM scale.
Group-based methods instead derive advantages directly from rollout statistics.
GRPO~\citep{grpo} samples $N$ trajectories per task under $q$ and assigns the standardized terminal return
\begin{equation}
A^{\mathrm{T}}(\tau_i)=\frac{R(\tau_i)-\mu_R}{\sigma_R}
\label{eq:pre-grpo}
\end{equation}
to every step of $\tau_i$, leaving intermediate credit dependent on subsequent decisions.
For finer credit, GiGPO~\citep{gigpo} compares transitions from shared states using their trajectory outcomes, while GraphGPO~\citep{graphgpo} scores transitions by shortest-path progress on the pooled rollout graph.
Both still derive credit from forward evidence in sampled futures, without explicitly conditioning the action distribution on the realized outcome.

\paragraph{Hindsight credit assignment.}
HCA~\citep{hca} takes a retrospective view, defining step-level credit through the relation between an action and a realized future outcome.
Its \emph{hindsight distribution} $h(a_t\mid s_t,s_k)$ gives the probability of taking $a_t$ at $s_t$ conditioned on later reaching $s_k$.
The ratio to the behavior policy,
\begin{equation}
\rho(s_t,a_t,s_k)=\frac{h(a_t\mid s_t,s_k)}{q(a_t\mid s_t)},
\label{eq:pre-hca}
\end{equation}
upweights actions whose probability increases under outcome conditioning and downweights those whose probability decreases.
Its practical difficulty lies in estimating $h$, which classical HCA learns with an auxiliary model over the action space, while HCAPO~\citep{hcapo} approximates it through an additional outcome-conditioned pass of the policy LLM.

\section{Method}
\label{sec:method}

We propose \textbf{GraphHCA}, as illustrated in Fig.~\ref{fig:graphhca-overview}, a model-free realization of hindsight credit assignment (HCA) for long-horizon LLM agents.
Our key observation is that, for terminal-goal task with deterministic transitions, the hindsight ratio admits a closed form in terms of state-wise success probabilities.
We define the logarithm of this success probability as the \emph{success potential}, estimate it model-free on a pooled rollout graph, and use its transition differences as step-level credit for policy optimization.
The following subsections derive the potential (Sec. \ref{sec:potential}), estimate it on the rollout graph (Sec. \ref{sec:graph}), integrate it into policy optimization (Sec. \ref{sec:po}), and analyze its properties (Sec. \ref{sec:theory}).
The detailed algorithm is presented in Appendix~\ref{app:algrithom}.

\subsection{The Success Potential from Bayesian Hindsight}
\label{sec:potential}

\paragraph{Hindsight as success-posterior inference.}
We specialize HCA to terminal-goal tasks with binary outcomes.
Let $\mathcal G$ and $\mathcal F$ denote the terminal goal and failure sets, and let $G=\{s_T\in\mathcal G\}$ denote the success event.
Rather than conditioning on an arbitrary future state $s_k$, we condition on $G$, so that $h(a_t\mid s_t,s_k)$ becomes the success-conditioned posterior $h(a\mid s,G)$.
By Bayes' rule,
\begin{equation}
h(a\mid s,G)
=\frac{q(a\mid s)\,P(G\mid s,a)}{P(G\mid s)},
\qquad
P(G\mid s)=\sum_a q(a\mid s)\,P(G\mid s,a).
\label{eq:bayes-posterior}
\end{equation}
Here, $P(G\mid s)$ is the marginal probability of eventual success from $s$.
For the exact analysis, we assume that the conditional probability of eventual success under $q$ depends only on the current environment state, independently of the preceding history and time step.
Under deterministic transitions, each action $a$ uniquely determines a successor $s'_a=f(s,a)$; together with the preceding assumption, this gives $P(G\mid s,a)=P(G\mid s'_a)$.
The action-dependent likelihood in the hindsight posterior is therefore characterized by the success probability of its successor state.

\paragraph{The success potential.}
Define the state-wise success probability
\begin{equation}
\Phi(s)=P(G\mid s),
\qquad
\Phi(s)=1\ \text{for }s\in\mathcal G,\quad
\Phi(s)=0\ \text{for }s\in\mathcal F,
\end{equation}
and its log-scale \emph{success potential}
\begin{equation}
\Psi(s)=\log\max\{\Phi(s),\epsilon\},
\qquad \epsilon\in(0,1).
\label{eq:psi-def}
\end{equation}
The floor only keeps the logarithmic representation finite when $\Phi(s)=0$ and does not alter the underlying success probability.
Under the terminal reward $R(\tau)=\mathbf 1[s_T\in\mathcal G]$, $\Phi$ is also the value function of the behavior policy $q$.

\begin{proposition}[Closed-form hindsight ratio]
\label{prop:closed-form}
Under the preceding assumption and deterministic transitions, the success probability at any non-terminal state satisfies
\begin{equation}
\Phi(s)
=\sum_a q(a\mid s)\Phi(s'_a)
=\mathbb E_{a\sim q}\!\left[\Phi(s'_a)\right].
\label{eq:phi-exact}
\end{equation}
For any state with $\Phi(s)>0$ and action with $q(a\mid s)>0$, the hindsight ratio specialized to the success event admits the closed form
\begin{equation}
\rho(s,a)
=\frac{h(a\mid s,G)}{q(a\mid s)}
=\frac{P(G\mid s,a)}{P(G\mid s)}
=\frac{\Phi(s'_a)}{\Phi(s)}.
\label{eq:rho-closed}
\end{equation}
\end{proposition}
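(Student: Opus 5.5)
The plan is to prove the two claims in order, the recursion \eqref{eq:phi-exact} first and then the closed-form ratio \eqref{eq:rho-closed}. Both rest on the Bayes identity \eqref{eq:bayes-posterior} and on the Markov-type assumption stated just before the proposition.

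For the recursion, fix a non-terminal state $s$ and condition on the first action taken there. By the law of total probability under the behavior policy,
\[
P(G\mid s)=\sum_a q(a\mid s)\,P(G\mid s,a).
\]
This is precisely the normalizer in \eqref{eq:bayes-posterior}. Next, use determinism. Once $a$ is taken at $s$, the next state is almost surely $s'_a=f(s,a)$. By the assumption that the probability of eventual success under $q$ depends only on the current environment state, and not on the history or the time step, the event $G$ after $(s,a)$ has probability $P(G\mid s'_a)=\Phi(s'_a)$. Hence $P(G\mid s,a)=\Phi(s'_a)$. Substituting gives $\Phi(s)=\sum_a q(a\mid s)\Phi(s'_a)$.

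A remark is needed for successors $s'_a$ that are terminal. There the boundary values $\Phi=1$ on $\mathcal G$ and $\Phi=0$ on $\mathcal F$ agree with $P(G\mid s'_a)$, since success is determined exactly when a terminal state is reached. So the identity holds uniformly, and no case split is required beyond this remark.

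For the ratio, take $\Phi(s)>0$ and $q(a\mid s)>0$. Divide \eqref{eq:bayes-posterior} by $q(a\mid s)$, which is legitimate because it is nonzero. This gives
\[
\frac{h(a\mid s,G)}{q(a\mid s)}=\frac{P(G\mid s,a)}{P(G\mid s)}.
\]
The denominator is well defined because $P(G\mid s)=\Phi(s)>0$, so the conditional $h(\cdot\mid s,G)$ exists. Then substitute $P(G\mid s,a)=\Phi(s'_a)$ from the first step and $P(G\mid s)=\Phi(s)$ by definition.

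The only delicate point is the step $P(G\mid s,a)=P(G\mid s'_a)$. It uses both determinism, so that conditioning on $(s,a)$ is the same as conditioning on arriving at $s'_a$, and the history-independence assumption. The latter is what lets us identify success probabilities across different visits to the same state. I would state explicitly that this is where both hypotheses enter, since the rest of the argument is just Bayes' rule and total probability.
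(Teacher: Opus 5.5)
Your proposal is correct and follows essentially the same route as the paper. The paper isolates your ``delicate point'' $P(G\mid s,a)=\Phi(s'_a)$ as a separate successor-identity lemma (determinism plus state sufficiency), then derives the recursion by marginalizing over the first action and the closed-form ratio by dividing the Bayes posterior by $q(a\mid s)$ and substituting that lemma.
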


The closed form in~\eqref{eq:rho-closed} avoids explicit estimation
of the hindsight distribution.
Using the floored potential $\Psi$ in~\eqref{eq:psi-def}, we define
the step credit $\ell(s,a)$ as
\begin{equation}
\ell(s,a) = \Psi(s'_a) - \Psi(s).
\label{eq:credit-exact}
\end{equation}
This credit remains finite when the underlying potential vanishes
and equals $\log\rho(s,a)$ whenever the floor is inactive at both
$s$ and $s'_a$.
Thus, hindsight credit assignment reduces to estimating the scalar
state potential $\Phi$.
When the floor is inactive, the resulting additive credit admits
the log-likelihood-ratio interpretation analyzed in
Sec.~\ref{sec:theory}.

\subsection{Model-Free Estimation on the Rollout Graph}
\label{sec:graph}

\paragraph{Pooled rollout graph.}
The marginalization~\eqref{eq:phi-exact} characterizes the exact success probability $\Phi$, but sampled rollouts reveal only a subset of the available transitions.
Following~\cite{graphgpo}, we pool the rollout group into a shared graph, merging identical environment states across rollouts and time steps.
Each realized transition $s\xrightarrow{a}s'_a$ is assigned the empirical weight
$\hat q(a\mid s)=n(s,a)/n(s)$, where $n(s)$ counts visits to $s$ and $n(s,a)$ those on which action $a$ is taken.
Pooling enables evidence sharing across trajectories, but merging repeated occurrences of the same state removes their temporal positions and hence their path-length information.
Consequently, an undiscounted backup may assign no cost to additional transitions or detours.
We therefore define the graph value through the discounted expectation recursion
\begin{equation}
\hat\Phi(s)=\bar\gamma\,\mathbb E_{a\sim\hat q}\big[\hat\Phi(s'_a)\big],
\qquad \bar\gamma\in(0,1),
\label{eq:phi-backup}
\end{equation}
with boundary values $\hat\Phi=1$ on $\mathcal G$ and $\hat\Phi=0$ on $\mathcal F$.
Unlike the exact recursion~\eqref{eq:phi-exact}, $\bar\gamma$ is not inherited from the task objective, but acts as a propagation discount that restores sensitivity to path length after state merging.
The resulting $\hat\Phi$ is a discounted empirical surrogate for $\Phi$, aggregating visits with potentially different interaction histories and remaining horizons.

\begin{proposition}[Contraction and convergence]
\label{prop:contraction}
For any directed graph, including graphs with cycles, the operator in~\eqref{eq:phi-backup} is a $\bar\gamma$-contraction in the sup norm.
Hence, iteration from any initialization converges geometrically to a unique fixed point $\hat\Phi$, corresponding to the $\bar\gamma$-discounted expected terminal success under the empirical policy $\hat q$.
\end{proposition}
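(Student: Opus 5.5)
We will treat the backup in \eqref{eq:phi-backup} as an operator $\mathcal T$ on the finite-dimensional space of real functions on the node set $V$ of the pooled graph, with the sup norm $\|\cdot\|_\infty$. The boundary conditions fix $\hat\Phi$ on $\mathcal G\cup\mathcal F$, so we define $\mathcal T$ on the non-terminal nodes $V_0=V\setminus(\mathcal G\cup\mathcal F)$:
\begin{equation*}
(\mathcal T\Phi)(s)=\bar\gamma\sum_a \hat q(a\mid s)\,\Phi(s'_a),\qquad s\in V_0,
\end{equation*}
where $\Phi(s'_a)$ is replaced by the fixed boundary value $1$ or $0$ whenever $s'_a\in\mathcal G\cup\mathcal F$. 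We must also handle nodes with no recorded outgoing transition, such as truncated rollouts that hit the horizon without reaching $\mathcal G$ or $\mathcal F$. We treat these as failure, with value $0$, so that they can be absorbed into $\mathcal F$. For every $s\in V_0$ with $n(s)>0$, the weights $\hat q(\cdot\mid s)=n(s,\cdot)/n(s)$ are nonnegative and sum to one. This row-stochasticity is the only structural fact the argument uses, and it holds regardless of cycles.

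\textbf{Contraction.} Take two functions $\Phi_1,\Phi_2$ on $V_0$ and extend both by the same boundary values. The boundary terms then cancel in the difference, and for each $s\in V_0$
\begin{equation*}
|(\mathcal T\Phi_1)(s)-(\mathcal T\Phi_2)(s)|\le\bar\gamma\sum_a\hat q(a\mid s)\,|\Phi_1(s'_a)-\Phi_2(s'_a)|\le\bar\gamma\,\|\Phi_1-\Phi_2\|_\infty .
\end{equation*}
The middle bound is a convex combination of differences, and the differences vanish at terminal successors. Taking the supremum over $s$ gives $\|\mathcal T\Phi_1-\mathcal T\Phi_2\|_\infty\le\bar\gamma\|\Phi_1-\Phi_2\|_\infty$ with $\bar\gamma<1$. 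Since $\mathbb R^{V_0}$ is complete, Banach's fixed-point theorem yields a unique fixed point $\hat\Phi$. It also gives geometric convergence $\|\mathcal T^k\Phi_0-\hat\Phi\|_\infty\le\bar\gamma^k\|\Phi_0-\hat\Phi\|_\infty$ from any initialization $\Phi_0$.

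\textbf{Interpretation.} To identify the fixed point, consider the Markov chain on $V$ whose transition kernel is $P(s'\mid s)=\sum_{a:\,s'_a=s'}\hat q(a\mid s)$, with $\mathcal G$ and $\mathcal F$ absorbing. Define $V(s)=\mathbb E_s\big[\bar\gamma^{\,T_{\mathcal G}}\mathbf 1[T_{\mathcal G}<\infty]\big]$, where $T_{\mathcal G}$ is the hitting time of $\mathcal G$ and paths absorbed in $\mathcal F$ contribute zero. A one-step first-transition (Markov property) decomposition shows that $V$ satisfies exactly the recursion \eqref{eq:phi-backup} with the same boundary values. This is the $\bar\gamma$-discounted expected terminal success under $\hat q$, and by uniqueness $V=\hat\Phi$. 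Equivalently, unrolling $\hat\Phi=\lim_k\mathcal T^k 0$ writes $\hat\Phi(s)$ as $\sum_{t\ge1}\bar\gamma^{t}P_s(T_{\mathcal G}=t)$. This series converges because $\bar\gamma<1$, even on cyclic graphs where $T_{\mathcal G}$ can be infinite.

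The only delicate step is the boundary and dangling-node bookkeeping: we must ensure every non-terminal node has a well-defined probability row, and that the fixed terminal values enter as constants rather than variables. Once that is settled, the contraction itself is immediate.
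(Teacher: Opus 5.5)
Your proposal is correct and follows the paper's proof essentially step for step: the convex-combination bound showing the backup is a $\bar\gamma$-contraction in the sup norm, then Banach's theorem for the unique fixed point and geometric rate, and finally identification of $\hat\Phi$ as the discounted goal-hitting value via a first-step decomposition plus uniqueness. Restricting the operator to non-terminal nodes and sending dangling (truncated) nodes to $0$ is only bookkeeping, equivalent to the paper's operator on all of $\mathcal V$ (where an empty outgoing sum also gives $0$); the contraction step needs only $\sum_a\hat q(a\mid s)\le 1$, so exact row-stochasticity is not essential.
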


Using the fixed point $\hat\Phi$, we define the graph success potential
$\hat\Psi(s)=\log\max\{\hat\Phi(s),\epsilon\}$ and the corresponding raw transition credit
\begin{equation}
\hat\ell(s,a)
=
\hat\Psi(s'_a)-\hat\Psi(s)
\in
[-\log(1/\epsilon),\,\log(1/\epsilon)].
\label{eq:credit-graph}
\end{equation}

\subsection{Policy Optimization}
\label{sec:po}

\paragraph{Step-level advantage.}
We take the graph credit of Sec.~\ref{sec:graph} as the reward of a realized transition, $R^{\mathrm s}(s_t,a_t)=\hat\ell(s_t,a_t)$.
Following the state-level grouping of~\citep{gigpo}, we standardize it within the group $\mathcal B_{s}$ of all transitions leaving the same state $s$ across the $N$ rollouts,
\begin{equation}
A^{\mathrm{s}}(s_t,a_t)=\frac{R^{\mathrm s}(s_t,a_t)-\mu_{\mathcal B_{s_t}}}{\sigma_{\mathcal B_{s_t}}},
\qquad \mu_{\mathcal B_s}=\operatorname{mean}_{\mathcal B_s}(R^{\mathrm s}),\quad
\sigma_{\mathcal B_s}=\operatorname{std}_{\mathcal B_s}(R^{\mathrm s}).
\label{eq:step-adv}
\end{equation}
We set $A^{\mathrm{s}}=0$ at singleton states ($|\mathcal B_s|=1$).
For both trajectory- and state-level normalization, we also assign zero advantages to groups with zero variance.
For groups with positive variance, standardization preserves the ordering induced by $\hat\ell$ and therefore ranks the sampled actions from a common state by their successor potential $\hat\Psi(s'_a)$.

\paragraph{Combined objective.}
We add the step term to the trajectory-level advantage $A^{\mathrm{T}}(\tau)$ of~\eqref{eq:pre-grpo},
\begin{equation}
\hat A_t=A^{\mathrm{T}}(\tau)+w_{\mathrm{s}}\,A^{\mathrm{s}}(s_t,a_t),
\qquad w_{\mathrm{s}}\ge0,
\label{eq:combined-adv}
\end{equation}
the two being complementary, since $A^{\mathrm{T}}$ anchors the update to task success and stays defined at singleton states while $A^{\mathrm{s}}$ densifies credit within a trajectory.
Using $\hat A_t$ for the policy update at step $t$, we optimize the clipped surrogate with a reference-KL penalty,
{\small
\begin{equation}
\mathcal J(\theta)=\mathbb E\!\left[\frac{1}{\sum_i T_i}\sum_{i=1}^{N}\sum_{t}
\min\!\Big(\varrho_{i,t}(\theta)\,\hat A_{i,t},\;
\operatorname{clip}\big(\varrho_{i,t}(\theta),1-\varepsilon,1+\varepsilon\big)
\hat A_{i,t}\Big)\right]
-\beta_{\mathrm{KL}}\,D_{\mathrm{KL}}\!\big(\pi_\theta\,\|\,\pi_{\mathrm{ref}}\big),
\label{eq:objective_1}
\end{equation}
}
with $\varrho_{i,t}(\theta)=\pi_\theta(a_{i,t}\mid s_{i,t})/\pi_{\theta_{\mathrm{old}}}(a_{i,t}\mid s_{i,t})$.
Setting $w_{\mathrm{s}}=0$ recovers GRPO exactly, so GraphHCA is a strict, model-free refinement of it, adding no model of its own since $\hat\Psi$ is read directly off the pooled graph, whose $\hat\Phi$ draws on the whole group including sub-paths of failed trajectories that never reach the goal on their own.

\subsection{Properties of the GraphHCA Credit}
\label{sec:theory}

We further establish that the exact credit is the log-likelihood ratio to the hindsight posterior when the floor is inactive (Proposition~\ref{prop:kl}), and that the graph credit separates from GraphGPO's distance surrogate by aggregation rather than by reachability (Proposition~\ref{prop:fidelity}).
Proofs are in Appendix~\ref{app:proofs}.

\begin{proposition}[Hindsight KL identity and bounded step credit]
\label{prop:kl}
Under the assumptions of Proposition~\ref{prop:closed-form}, fix a non-terminal state $s$ such that $\Phi(s)\ge\epsilon$
and $\Phi(s'_a)\ge\epsilon$ for every action with $q(a\mid s)>0$.
Then, on the support of $q(\cdot\mid s)$, Proposition~\ref{prop:closed-form} gives
$\ell(s,a)=\log[h(a\mid s,G)/q(a\mid s)]$.
Therefore,
\begin{equation}
\mathbb E_{a\sim q(\cdot\mid s)}
\big[\ell(s,a)\big]
=
-\mathrm{KL}\big(
q(\cdot\mid s)\,\big\|\,h(\cdot\mid s,G)
\big)
\le 0,
\label{eq:kl-identity}
\end{equation}
with equality if and only if
$q(\cdot\mid s)=h(\cdot\mid s,G)$.
Independently, for the empirical graph, $\hat\Phi(s)\in[0,\bar\gamma]$ at every interior
node. Whether or not the floor is active, it gives
$\hat\Psi(s)\in[\log\epsilon,0]$ and
$|\hat\ell(s,a)|\le\log(1/\epsilon)$.
\end{proposition}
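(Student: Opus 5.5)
The plan has two independent parts, mirroring the statement: an exact-credit part that rests on Proposition~\ref{prop:closed-form}, and an empirical-graph part that rests on Proposition~\ref{prop:contraction}.

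\textbf{Exact credit and the KL identity.} Because $\Phi(s)\ge\epsilon$ and $\Phi(s'_a)\ge\epsilon$ on the support, the floor in~\eqref{eq:psi-def} is inactive at both endpoints. Hence $\Psi(s)=\log\Phi(s)$ and $\Psi(s'_a)=\log\Phi(s'_a)$, and
\[
\ell(s,a)=\log\frac{\Phi(s'_a)}{\Phi(s)}.
\]
Since $\Phi(s)\ge\epsilon>0$, Proposition~\ref{prop:closed-form} applies to every $a$ with $q(a\mid s)>0$, and~\eqref{eq:rho-closed} converts this into $\ell(s,a)=\log[h(a\mid s,G)/q(a\mid s)]$.

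Taking the expectation under $q(\cdot\mid s)$ gives
\[
\mathbb E_q[\ell]=\sum_a q(a\mid s)\log\frac{h(a\mid s,G)}{q(a\mid s)}=-\mathrm{KL}\big(q\,\|\,h\big).
\]
One well-definedness check is needed. By~\eqref{eq:bayes-posterior}, $h(\cdot\mid s,G)$ is positive wherever both $q>0$ and $\Phi(s'_a)>0$, and the latter holds here since $\Phi(s'_a)\ge\epsilon$. So the support of $q$ lies inside the support of $h$ and the KL is finite. Nonnegativity of KL, with equality iff $q=h$, follows from Gibbs' inequality (Jensen applied to the concave $\log$).

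It is also worth recording that $h(\cdot\mid s,G)$ sums to one by~\eqref{eq:phi-exact}. This normalization is what makes the expectation a genuine negative KL rather than merely an upper-bounded quantity.

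\textbf{Bounds for the empirical graph.} By Proposition~\ref{prop:contraction}, the operator $T$ in~\eqref{eq:phi-backup} has a unique fixed point, reachable by iterating from any start. Consider the closed set
\[
\mathcal C=\{\Phi:\ \Phi\in[0,1]\text{ on all nodes},\ \Phi=1\text{ on }\mathcal G,\ \Phi=0\text{ on }\mathcal F\}.
\]
$T$ maps $\mathcal C$ into itself: at an interior node, a $\hat q$-weighted average of values in $[0,1]$ lies in $[0,1]$, and multiplying by $\bar\gamma$ places it in $[0,\bar\gamma]$. Starting the iteration inside $\mathcal C$, closedness forces the limit $\hat\Phi$ to lie in $\mathcal C$, and applying $T$ once more gives $\hat\Phi(s)\in[0,\bar\gamma]$ at every interior node.

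Then $\max\{\hat\Phi,\epsilon\}\in[\epsilon,1]$ at every node, including terminals, whose values are $0$ or $1$. This yields $\hat\Psi\in[\log\epsilon,0]$. The difference of two numbers in this interval has absolute value at most $\log(1/\epsilon)$, which gives the bound on $\hat\ell$.

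\textbf{Main obstacle.} The only delicate step is the support/normalization bookkeeping in the KL identity. One has to make sure that $h$ is a proper distribution on the support of $q$, so the identity is an exact equality, and that no $\log 0$ appears. The hypotheses $\Phi(s'_a)\ge\epsilon$ for every supported action and $\Phi(s)\ge\epsilon$ are there precisely to secure this. The rest is routine.
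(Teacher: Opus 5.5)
Your proposal is correct and follows the same route as the paper's proof. The inactive floor plus Proposition~\ref{prop:closed-form} give $\ell(s,a)=\log[h(a\mid s,G)/q(a\mid s)]$, Gibbs' inequality gives the KL identity, and a range bound on $\hat\Phi$ yields $\hat\Psi(s)\in[\log\epsilon,0]$ and $|\hat\ell(s,a)|\le\log(1/\epsilon)$. Your invariant-closed-set argument makes rigorous what the paper asserts in one line, and it is slightly more complete: the paper's proof only establishes $0\le\hat\Phi\le 1$, whereas your extra application of the backup gives the stated $[0,\bar\gamma]$ bound at interior nodes.
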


\begin{proposition}[Expectation versus best-case aggregation]
\label{prop:fidelity}
Let $R^{\mathrm G}(s,a,s'_a)$ denote GraphGPO's step reward, and $R^{\mathrm G}(s,a,s'_a)\propto \bar\gamma^{\,d(s'_a)+c(s,a)}$.
$d(\cdot)$ the shortest-path distance to $\mathcal G$ on the empirical graph, and adopt the unit-cost setting of~\citep{graphgpo}.
Replacing the expectation in~\eqref{eq:phi-backup} by a maximum under the same boundary defines the fixed point $\Phi^{\max}$, and
\begin{equation}
\hat\Phi(s)\ \le\ \Phi^{\max}(s)\ =\ \bar\gamma^{\,d(s)}
\qquad(s\ \text{goal-reachable}),
\label{eq:fidelity}
\end{equation}
with equality if and only if $\hat q$ is supported on the argmax successors at $s$ and downstream.
The two credits are therefore one backup under two aggregations, $\hat\ell$ ranking the actions at $s$ by $\hat\Psi(s'_a)$ and $R^{\mathrm G}$ by $d(s'_a)$.
\end{proposition}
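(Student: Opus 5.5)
The proof splits into three parts: the upper bound $\hat\Phi\le\Phi^{\max}$, the closed form $\Phi^{\max}=\bar\gamma^{\,d(s)}$, and the equality characterization.

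\textbf{Both operators contract.} Write the two Bellman operators on the empirical graph as
\[
(T\Phi)(s)=\bar\gamma\,\mathbb E_{a\sim\hat q}[\Phi(s'_a)]
\quad\text{and}\quad
(T^{\max}\Phi)(s)=\bar\gamma\max_{a:\hat q(a\mid s)>0}\Phi(s'_a),
\]
with the same boundary values $1$ on $\mathcal G$ and $0$ on $\mathcal F$. The argument of Proposition~\ref{prop:contraction} carries over verbatim to $T^{\max}$, because the maximum is $1$-Lipschitz in the sup norm. So $T^{\max}$ is also a $\bar\gamma$-contraction with a unique fixed point $\Phi^{\max}$.

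\textbf{The inequality.} Both operators are monotone, and $T\Phi\le T^{\max}\Phi$ pointwise since an expectation never exceeds the maximum. Start from $\Phi^{\max}$: we get $T\Phi^{\max}\le T^{\max}\Phi^{\max}=\Phi^{\max}$. Iterating and using monotonicity gives $T^k\Phi^{\max}\le\Phi^{\max}$ for every $k$. Letting $k\to\infty$ and applying Proposition~\ref{prop:contraction}, we conclude $\hat\Phi\le\Phi^{\max}$.

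\textbf{The closed form.} Define $V(s)=\bar\gamma^{\,d(s)}$ on goal-reachable states and $V(s)=0$ on the rest. We check that $V$ satisfies $V=T^{\max}V$; uniqueness then gives $\Phi^{\max}=V$. Under unit costs, the shortest-path distance satisfies $d(s)=1+\min_a d(s'_a)$ at interior nodes. Since $t\mapsto\bar\gamma^{t}$ is decreasing, this yields
\[
\bar\gamma\max_a\bar\gamma^{\,d(s'_a)}=\bar\gamma^{\,d(s)}.
\]
States that cannot reach $\mathcal G$ have only non-reaching successors, so $0$ is consistent there. Goal nodes have $d=0$, matching $\Phi=1$.

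\textbf{Equality characterization.} This is the delicate step. For sufficiency, suppose $\hat q$ is supported on argmax successors at $s$ and at every state reachable from $s$ under $\hat q$. Then $T$ and $T^{\max}$ coincide on that closed subgraph. Restricted to the subgraph, $\Phi^{\max}$ is a fixed point of $T$, hence equals $\hat\Phi$ there; I would use uniqueness of the fixed point on the restricted, still contractive, system.

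For necessity, suppose some state $u$ reachable from $s$ with positive $\hat q$-probability puts mass on a non-argmax successor. Then at $u$,
\[
(T\Phi^{\max})(u)<(T^{\max}\Phi^{\max})(u),
\]
because a strict average with a smaller value lowers the mean. We also have the standard identity
\[
\Phi^{\max}-\hat\Phi=\sum_{k\ge0}T_{\text{lin}}^{k}\bigl(T^{\max}\Phi^{\max}-T\Phi^{\max}\bigr),
\]
where $T_{\text{lin}}$ is the linear part $\bar\gamma\hat P$ of $T$. Every term is nonnegative. The term corresponding to the path from $s$ to $u$ is strictly positive, since path probabilities and discounts are positive. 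Hence $\hat\Phi(s)<\Phi^{\max}(s)$.

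The main obstacle is stating this "downstream" condition precisely, namely on the $\hat q$-reachable set from $s$, and handling cycles. I would use the Neumann-series representation above because it handles cycles cleanly.

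\textbf{Ranking statement.} The final sentence of the proposition then follows by comparing the two credits. Since $\log$ is monotone, $\hat\ell$ orders the actions at $s$ by $\hat\Psi(s'_a)$. For GraphGPO, $R^{\mathrm G}\propto\bar\gamma^{\,d(s'_a)+1}$ under unit costs, which is $\bar\gamma$ times $\Phi^{\max}(s'_a)$. So it orders actions by $d(s'_a)$, which is the same backup under max aggregation.
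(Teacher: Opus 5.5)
Your proposal is correct and follows essentially the same route as the paper. It uses the same two $\bar\gamma$-contractive operators and a monotone comparison for $\hat\Phi\le\Phi^{\max}$; you iterate $\mathcal T$ from the supersolution $\Phi^{\max}$, where the paper compares $\mathcal T^k\phi_0$ and $(\mathcal T^{\max})^k\phi_0$ from a common start. You then verify that $\bar\gamma^{\,d(s)}$ is the unique fixed point of the max backup and make the same ranking observation.

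Your Neumann series $\Phi^{\max}-\hat\Phi=\sum_{k\ge0}(\bar\gamma\hat P)^k(\mathcal T^{\max}\Phi^{\max}-\mathcal T\Phi^{\max})$ is the unrolled form of the paper's one-step decomposition $\Delta(s)=\bar\gamma[\max_a\Phi^{\max}(s'_a)-\sum_a\hat q(a\mid s)\Phi^{\max}(s'_a)]+\bar\gamma\sum_a\hat q(a\mid s)\Delta(s'_a)$. It handles cycles in the equality condition more explicitly than the paper's ``recursively applying'' step.
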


Under the floor-inactive conditions of Proposition~\ref{prop:kl}, the exact step credit has a log-likelihood-ratio interpretation relative to the hindsight posterior.
Proposition~\ref{prop:fidelity} further distinguishes GraphHCA from GraphGPO by their aggregation rules: GraphHCA evaluates successors through empirical expected continuation values, whereas GraphGPO relies on shortest-path distance and thus reflects a best-case continuation.
Consequently, GraphHCA remains sensitive to transition frequencies and can distinguish equidistant successors whose distance-based scores coincide.
HCAPO~\citep{hcapo} targets the same hindsight ratio through an additional outcome-conditioned policy evaluation, while GraphHCA obtains its credit directly from the pooled rollout graph without an auxiliary hindsight model.

\section{Experiments}
\label{sec:experiments}

\subsection{Experimental Setup}
\label{sec:exp-setup}

\paragraph{Benchmarks.}
We evaluate GraphHCA on three challenging multi-turn agentic benchmarks: ALFWorld~\citep{shridhar2020alfworld} and WebShop~\citep{yao2022webshop} for LLM agents, and the $6\times6$ Sokoban game~\citep{SchraderSokoban2018} for a vision-language model (VLM) agent.
Following prior work~\citep{gigpo,graphgpo}, we report the success rate on ALFWorld, the task score and success rate on WebShop, and the success rate on Sokoban, averaged over three random seeds for all RL methods.
Detailed descriptions of the three benchmarks are provided in Appendix~\ref{app:benchmarks}.

\begin{figure}[!t]
\centering
\begin{subfigure}[b]{0.325\textwidth}
    \centering
    \includegraphics[width=\linewidth]{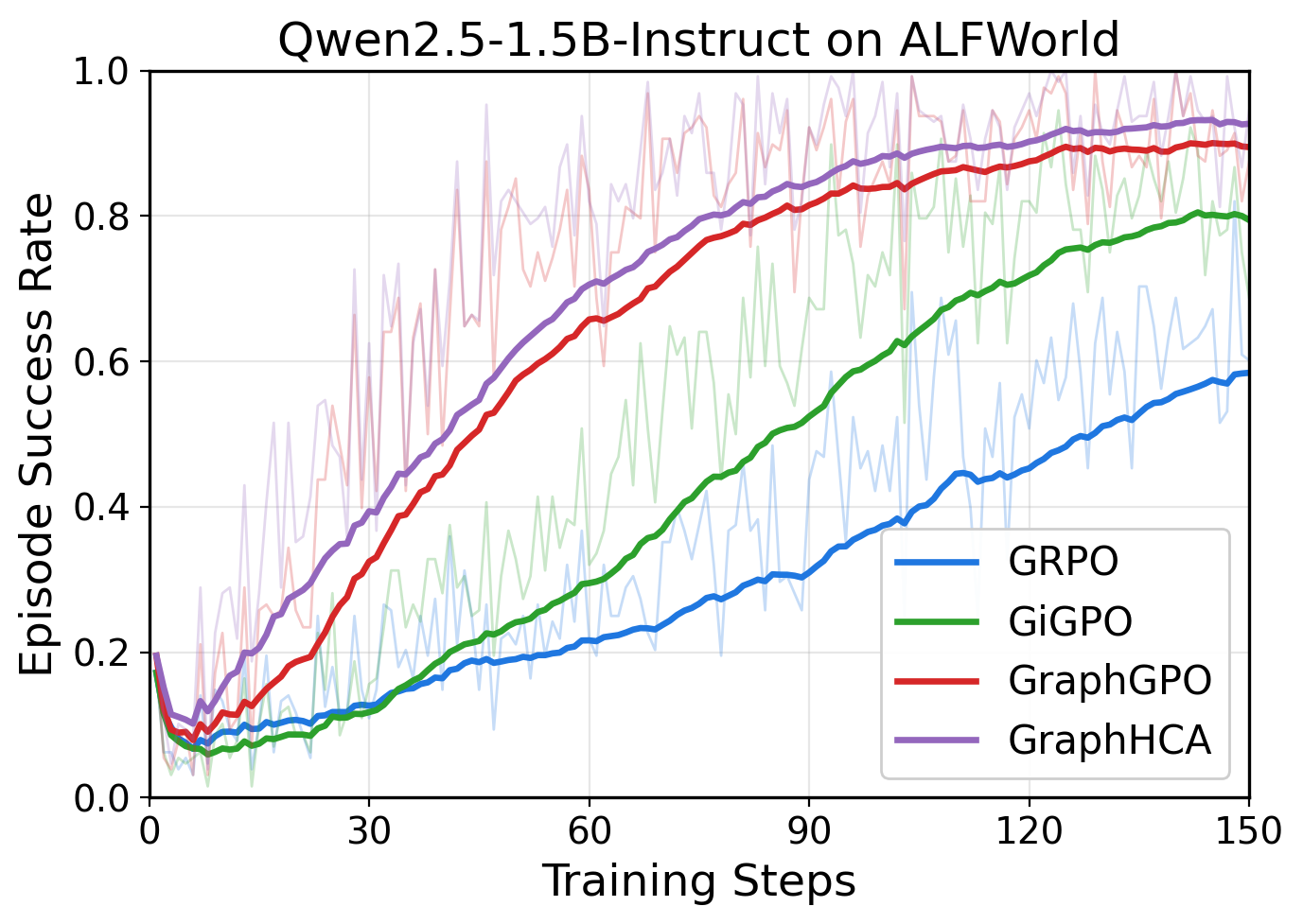}
    \caption{ALFWorld}
    \label{fig:curve-alfworld}
\end{subfigure}
\hfill
\begin{subfigure}[b]{0.325\textwidth}
    \centering
    \includegraphics[width=\linewidth]{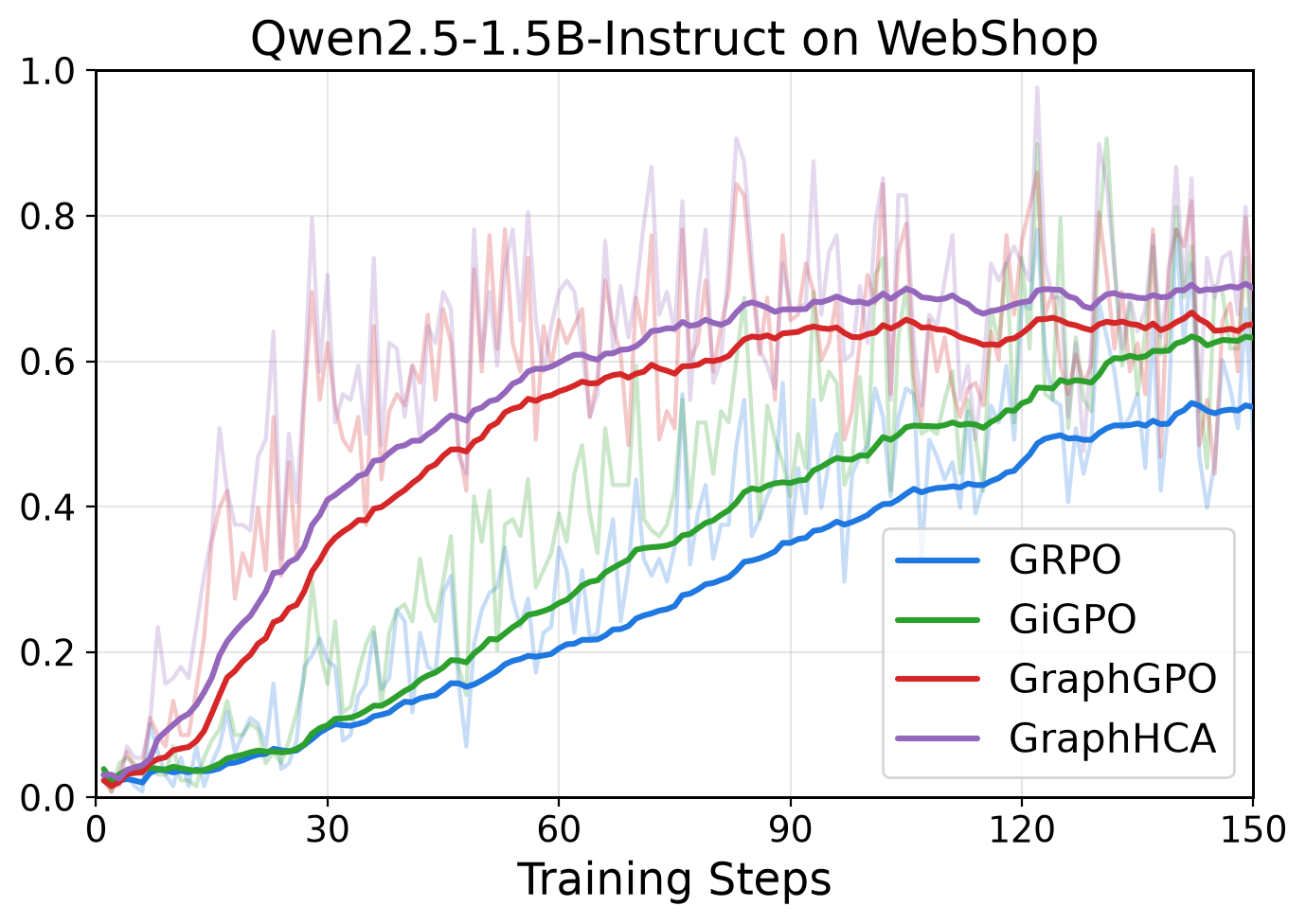}
    \caption{WebShop}
    \label{fig:curve-webshop}
\end{subfigure}
\hfill
\begin{subfigure}[b]{0.325\textwidth}
    \centering
    \includegraphics[width=\linewidth]{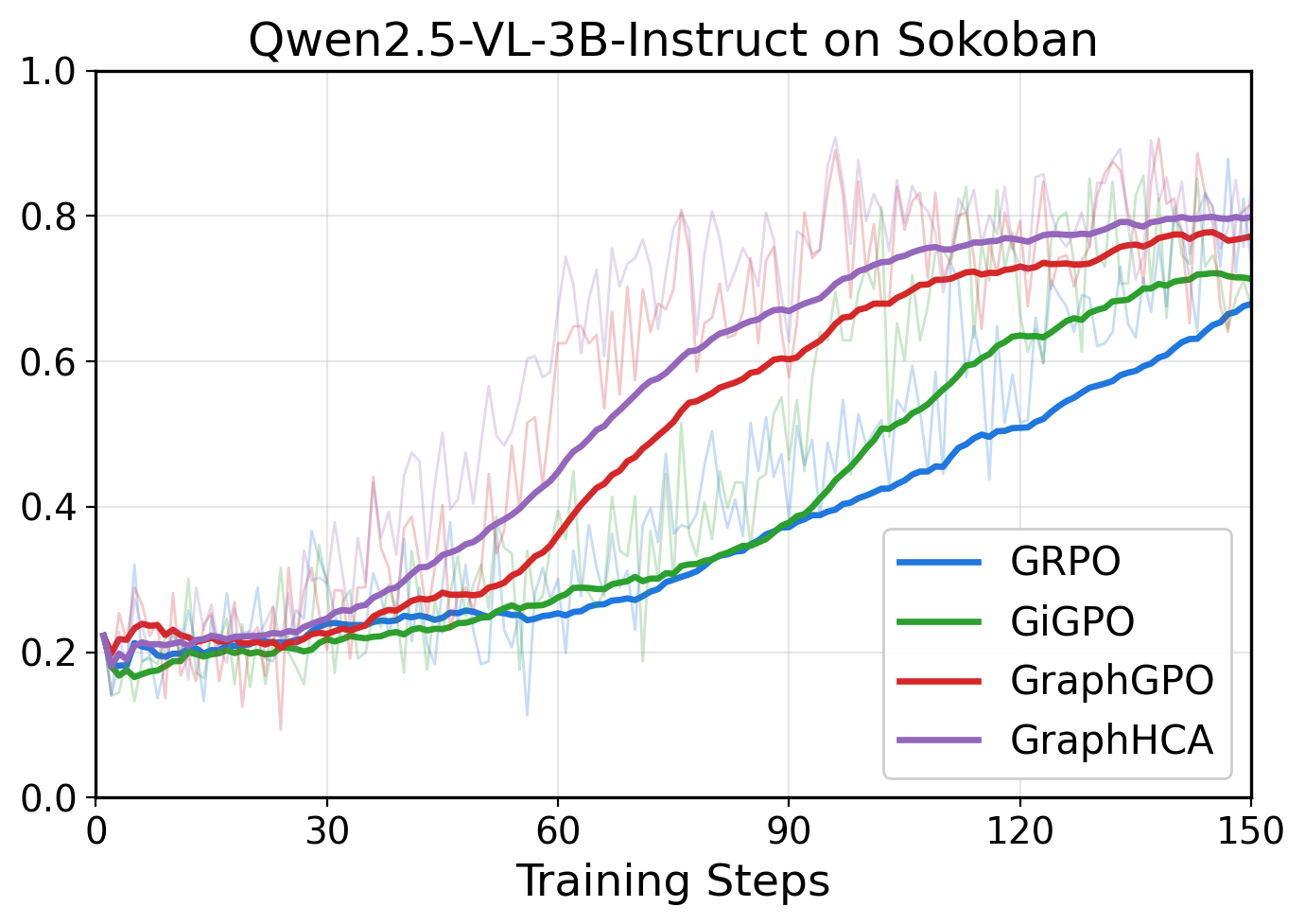}
    \caption{Sokoban}
    \label{fig:curve-sokoban}
\end{subfigure}
\caption{Training success rate over update steps on ALFWorld and WebShop with Qwen2.5-1.5B-Instruct, and on Sokoban with Qwen2.5-VL-3B-Instruct. GraphHCA converges faster and reaches higher final success than the fine-grained credit-assignment baselines across all three environments.}
\vspace{-10pt}
\label{fig:success_curves}
\end{figure}

\begin{table*}[!ht]
\centering
\small
\caption{Test performance on ALFWorld and WebShop. For ALFWorld, we report the average success rate (\%) for each subtask and the overall result. For WebShop, we report the average task score and the average success rate (\%). RL results are averaged over three random seeds. The best performance in each column is highlighted in \textbf{bold}.}
\label{tab:alfworld_webshop}
\resizebox{1.0\textwidth}{!}{
\setlength{\tabcolsep}{0.8mm}{
\renewcommand{\arraystretch}{1.1}
\begin{tabular}{ll|ccccccc|cc}
\toprule
\multirow{2}{*}{\textbf{Type}} & \multirow{2}{*}{\textbf{Method} }
& \multicolumn{7}{c|}{\textbf{ALFWorld}}
& \multicolumn{2}{c}{\textbf{WebShop}} \\
\cmidrule(lr){3-9} \cmidrule(lr){10-11}
&
& Pick & Clean & Cool & Look & Heat & Pick2 & All
& Score & Succ. \\
\midrule

\multicolumn{11}{l}{\textbf{\textit{Closed-Source Models}}} \\
Prompting & GPT-4o
& 75.3 & 60.8 & 31.2 & 56.7 & 21.6 & 49.8 & 48.0
& 31.8 & 23.7 \\
Prompting & Gemini-2.5-Pro
& 92.8 & 63.3 & 62.1 & 69.0 & 26.6 & 58.7 & 60.3
& 42.5 & 35.9 \\
\midrule

\multicolumn{11}{l}{\textbf{\textit{Qwen2.5-1.5B-Instruct}}} \\
Prompting & Qwen2.5
& 5.9 & 5.5 & 3.3 & 9.7 & 4.2 & 0.0 & 4.1
& 23.1 & 5.2 \\
Prompting & ReAct
& 17.4 & 20.5 & 15.7 & 6.2 & 7.7 & 2.0 & 12.8
& 40.1 & 11.3 \\
Prompting & Reflexion
& 35.3 & 22.2 & 21.7 & 13.6 & 19.4 & 3.7 & 21.8
& 55.8 & 21.9 \\

RL Training & PPO
& 64.8{\std{3.5}} & 40.5{\std{6.9}} & 57.1{\std{4.9}}
& 60.6{\std{6.6}} & 46.4{\std{4.0}} & 47.4{\std{1.9}}
& 54.4{\std{3.1}}
& 73.8{\std{3.0}} & 51.5{\std{2.9}} \\

RL Training & RLOO
& 88.3{\std{3.0}} & 52.8{\std{8.6}} & 71.0{\std{5.9}}
& 62.8{\std{8.7}} & 66.4{\std{5.5}} & 56.9{\std{4.7}}
& 69.7{\std{2.5}}
& 73.9{\std{5.6}} & 52.1{\std{6.7}} \\

\cmidrule(lr){2-11}

RL Training & GRPO
& 88.2{\std{2.9}} & 72.6{\std{15.9}} & 67.5{\std{7.5}}
& 43.3{\std{18.3}} & 76.6{\std{0.4}} & 51.6{\std{9.5}}
& 71.1{\std{1.6}}
& 81.9{\std{3.5}} & 64.1{\std{2.3}} \\

RL Training & EMPG & 85.5 & 33.5 & 78.9 & 76.2 & 74.7 & 69.1 & 73.7 & 80.4 & 60.8 \\

RL Training & GiGPO
& 97.6{\std{2.4}} & 93.7{\std{1.3}} & \textbf{89.4{\std{0.6}}}
& 65.6{\std{3.1}} & 95.0{\std{5.0}} & 80.6{\std{6.9}}
& 89.5{\std{1.2}}
& 85.2{\std{0.8}} & 74.2{\std{0.8}} \\

RL Training & HCAPO
& 88.6{\std{7.0}} & 97.6{\std{1.8}} & 84.2{\std{0.0}}
& 75.0{\std{0.0}} & 90.7{\std{6.9}} & 74.2{\std{6.9}}
& 87.0{\std{4.1}}
& 83.8{\std{0.7}} & 68.5{\std{1.0}} \\

RL Training & GraphGPO
& 95.6{\std{4.4}} & 97.6{\std{2.4}} & 83.5{\std{3.5}}
& 71.3{\std{8.8}} & \textbf{97.6{\std{2.4}}} & 86.6{\std{2.3}}
& 91.0{\std{2.0}}
& 88.1{\std{0.2}} & \textbf{81.3{\std{0.8}}} \\

\rowcolor{gray!20} \textbf{RL Training} & \textbf{GraphHCA}
& \textbf{100.0{\std{0.0}}} & \textbf{100.0{\std{0.0}}} & 85.7{\std{0.7}}
& \textbf{89.9{\std{2.4}}} & 96.7{\std{3.3}} & \textbf{95.8{\std{4.2}}}
& \textbf{95.7{\std{1.2}}}
& \textbf{89.3{\std{0.1}}} & \textbf{81.3{\std{0.8}}} \\

\midrule
\multicolumn{11}{l}{\textbf{\textit{Qwen2.5-7B-Instruct}}} \\
Prompting & Qwen2.5
& 33.4 & 21.6 & 19.3 & 6.9 & 2.8 & 3.2 & 14.8
& 26.4 & 7.8 \\
Prompting & ReAct
& 48.5 & 35.4 & 34.3 & 13.2 & 18.2 & 17.6 & 31.2
& 46.2 & 19.5 \\
Prompting & Reflexion
& 62.0 & 41.6 & 44.9 & 30.9 & 36.3 & 23.8 & 42.7
& 58.1 & 28.8 \\

RL Training & PPO
& 92.3{\std{4.0}} & 64.0{\std{8.4}} & \textbf{92.5{\std{2.4}}}
& 89.5{\std{7.0}} & 80.3{\std{2.0}} & 68.8{\std{8.3}}
& 80.4{\std{2.7}}
& 81.4{\std{3.1}} & 68.7{\std{5.1}} \\

RL Training & RLOO
& 87.6{\std{4.3}} & 78.2{\std{8.3}} & 87.3{\std{5.8}}
& 81.3{\std{7.6}} & 71.9{\std{5.2}} & 48.9{\std{8.4}}
& 75.5{\std{4.6}}
& 80.3{\std{3.2}} & 65.7{\std{4.0}} \\

\cmidrule(lr){2-11}

RL Training & GRPO
& 88.2{\std{5.9}} & 71.2{\std{9.6}} & 70.0{\std{5.0}} & 50.0{\std{12.5}} & 64.3{\std{2.4}} & 73.7{\std{10.5}} & 73.4{\std{0.8}}
& 82.3{\std{2.7}} & 75.4{\std{2.7}} \\

RL Training & EMPG & 92.9 & 75.2 & 74.8 & 86.3 & 73.7 & 65.3 & 78.5 & 81.0 & 69.3 \\

RL Training & GiGPO
& \textbf{100.0{\std{0.0}}} & 92.3{\std{7.7}} & 85.0{\std{0.0}} & 81.3{\std{6.3}} & \textbf{97.6{\std{2.4}}} & 94.7{\std{0.1}} & 93.8{\std{0.8}}
& 86.9{\std{3.9}} & 82.0{\std{3.1}} \\

RL Training & HCAPO
& 99.1{\std{1.3}} & 97.3{\std{1.9}} & 90.8{\std{6.6}}
& 90.3{\std{2.0}} & 81.8{\std{8.8}} & 81.9{\std{10.0}}
& 91.4{\std{2.3}}
& 85.1{\std{1.3}} & 73.8{\std{2.8}} \\

RL Training & GraphGPO
& 98.5{\std{1.5}} & 98.1{\std{2.7}} & 87.5{\std{3.5}} & 68.8{\std{8.8}} & 95.2{\std{0.0}} & 97.4{\std{3.7}} & 94.1{\std{0.6}}
& 88.6{\std{3.9}} & 82.0{\std{3.1}} \\

\rowcolor{gray!20} \textbf{RL Training }& \textbf{GraphHCA}
& \textbf{100.0{\std{0.0}}} & \textbf{100.0{\std{0.0}}} & 86.7{\std{2.9}} & \textbf{95.8{\std{7.2}}} & 96.8{\std{2.8}} & \textbf{98.3{\std{3.0}}} & \textbf{96.9{\std{0.8}}}
& \textbf{90.5{\std{0.8}}} & \textbf{82.4{\std{1.2}}} \\

\bottomrule
\end{tabular}
}
}
\vspace{-0pt}
\end{table*}


\begin{table}[t]
\centering
\caption{Test performance of VLM agents using Qwen2.5-VL-3B-Instruct on the interactive game environment Sokoban. We report the average success rate (\%) over three random seeds.}
\label{tab:sokoban}
\resizebox{\textwidth}{!}{
\setlength{\tabcolsep}{4.5mm}{
\renewcommand{\arraystretch}{1.4}
\begin{tabular}{cccccc}
\toprule
 & \textbf{Qwen2.5-VL} & \textbf{GRPO} & \textbf{GiGPO} & \textbf{GraphGPO} & \cellcolor{gray!20}\textbf{GraphHCA} \\
\midrule
\textbf{Type} & Prompting & RL Training & RL Training & RL Training & \cellcolor{gray!20}\textbf{RL Training} \\
\textbf{Sokoban [6$\times$6]} & 11.7 & 71.1{\std{3.9}} & 79.0{\std{3.1}} & 81.6{\std{4.03}} & \cellcolor{gray!20}\textbf{84.0{\std{1.2}}} \\
\bottomrule
\end{tabular}
}
}
\vspace{-10pt}
\end{table}

\begin{table}[t]
\centering
\small
\caption{Ablation over the power-mean order $\omega$ of \eqref{eq:mellowmax} with Qwen2.5-1.5B-Instruct on AlfWorld: success rate (\%) averaged over three random seeds. The per-column best is in \textbf{bold}.}
\label{tab:mellowmax}
\setlength{\tabcolsep}{2.0mm}
\renewcommand{\arraystretch}{1.1}
\begin{tabular}{l|ccccccc}
\toprule
$\omega$ & \textbf{Pick} & \textbf{Clean} & \textbf{Cool} & \textbf{Look} & \textbf{Heat} & \textbf{Pick2} & \textbf{All} \\
\midrule
\rowcolor{gray!20} $1$ (GraphHCA)
& \textbf{100.0{\std{0.0}}} & \textbf{100.0{\std{0.0}}} & 85.7{\std{0.7}}
& \textbf{89.9{\std{2.4}}} & 96.7{\std{3.3}} & \textbf{95.8{\std{4.2}}}
& \textbf{95.7{\std{1.2}}} \\
$5$
& 98.5\std{2.1} & \textbf{100.0\std{0.0}} & \textbf{90.0\std{7.1}} & 87.5\std{17.7} & \textbf{100.0\std{0.0}} & 86.8\std{11.2} & 95.3\std{4.5} \\
$10$
& 92.9\std{10.1} & 95.5\std{6.4} & 85.9\std{1.2} & 70.9\std{5.9} & 85.1\std{14.3} & 91.2\std{2.4} & 91.8\std{5.5} \\
\midrule
max ($\omega\!\to\!\infty$)
& 89.7\std{2.1} & 98.1\std{2.7} & 77.5\std{3.5} & 75.0\std{0.0} & 95.2\std{0.0} & 86.9\std{3.7} & 90.1\std{1.1} \\
GraphGPO
& 95.6{\std{4.4}} & 97.6{\std{2.4}} & 83.5{\std{3.5}}
& 71.3{\std{8.8}} & 97.6{\std{2.4}} & 86.6{\std{2.3}}
& 91.0{\std{2.0}} \\
\bottomrule
\end{tabular}
\vspace{-5pt}
\end{table}

\paragraph{Implementation Details.}
Following~\cite{graphgpo}, we use Qwen2.5-1.5B-Instruct and Qwen2.5-7B-Instruct~\citep{qwen2025qwen25technicalreport} as the base LLMs and Qwen2.5-VL-3B-Instruct~\citep{bai2025qwen25vltechnicalreport} as the base VLM.
The agent keeps only the two most recent interaction steps as memory and emits its reasoning in \texttt{<think>} tags before the action in \texttt{<action>} tags~\citep{wei2023chainofthoughtpromptingelicitsreasoning}.
All methods share the same training hyperparameters: group size $N=8$, learning rate $1 \times {10}^{-6}$, and 150 update steps.
GraphHCA adds three quantities of its own: the propagation discount $\bar\gamma=0.95$ of \eqref{eq:phi-backup}, failure floor $\epsilon=1 \times {10}^{-2}$ of \eqref{eq:credit-graph}, and dense weight $w_{\mathrm{s}}=1$ of \eqref{eq:combined-adv}.
Appendix~\ref{app:hyper} reports the more configurations.

\subsection{Main Results}
\label{sec:exp-main}

\paragraph{LLM Agents on ALFWorld and WebShop.}
As shown in Table~\ref{tab:alfworld_webshop}, GraphHCA achieves the best overall ALFWorld success rate and the best result on both WebShop metrics at both model scales.
It improves over the trajectory-level baseline GRPO by 24.6 and 23.5 points of overall ALFWorld success and by 17.2 and 7.0 points of WebShop success rate for the 1.5B and 7B models respectively.
Against the closest baselines GiGPO, HCAPO, and GraphGPO, it still improves overall ALFWorld success by 4.7 to 8.7 points at 1.5B and by 2.8 to 5.5 points at 7B, so the gain comes from the credit rule itself.
The largest gains appear on Look and Pick2, the two longest-horizon subtasks, where it leads the best 1.5B baseline by 13.7 and 9.2 points, while the near-saturated Cool and Heat remain comparable.
Figures~\ref{fig:curve-alfworld} and~\ref{fig:curve-webshop} show the success rate over training steps.
GraphHCA achieves the highest success rate over most of training on both ALFWorld and WebShop.
It also learns fastest, reaching the final success of GRPO in about a third of the training steps.
The success potential provides informative credit when successful rollouts are still rare, which accelerates early learning.

\paragraph{VLM Agent on Sokoban.}
On Sokoban, where observations are rendered images rather than text, GraphHCA attains a success rate of 84.0\%, outperforming GRPO by 12.9 points, GiGPO by 5.0, and GraphGPO by 2.4 (Table~\ref{tab:sokoban}).
It is also the most stable across seeds, with a standard deviation of 1.2 against 3.1 or more for every RL baseline.
Figure~\ref{fig:curve-sokoban} shows where this gain arises: all four methods stay near a $0.2$ success rate for the first twenty updates, GraphHCA is the first to leave that plateau, and the resulting margin holds to the end of training.

\subsection{Ablation Study}
\label{sec:exp-ablation}

\begin{wrapfigure}{r}{0.5\textwidth}
\vspace{-1.5\baselineskip}
\centering
\includegraphics[width=\linewidth]{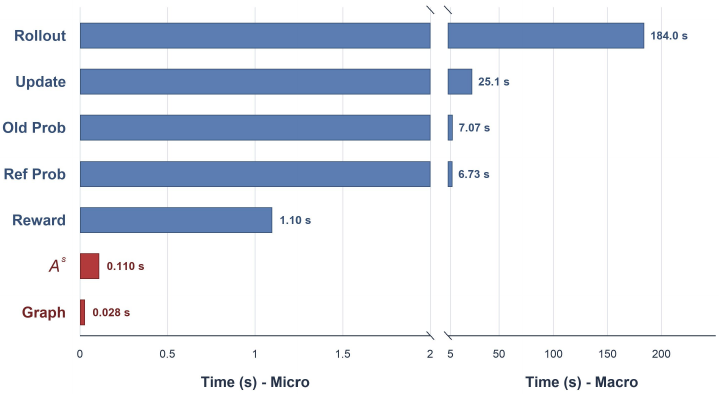}
\caption{Per-iteration runtime breakdown of the training stages. Blue bars denote stages shared by all group-based methods, while red bars denote the overhead of GraphHCA. 
}
\label{fig:runtime}
\vspace{-1.0\baselineskip}
\end{wrapfigure}

\paragraph{Computational overhead.}
GraphHCA introduces no additional learned model and operates solely on the pooled rollout graph constructed from the sampled trajectories.
Graph construction requires a single pass over the observed transitions, and each sweep of the graph backup in~\eqref{eq:phi-backup} likewise scales linearly with the graph size, without additional model inference or gradient computation.
As shown in Figure~\ref{fig:runtime}, rollout and policy update dominate the iteration time at $184.0$\,s and $25.1$\,s, respectively, whereas graph construction and step-level advantage computation require only $0.028$\,s and $0.110$\,s. 
Together, the GraphHCA-specific computation accounts for only $0.062\%$ of the measured per-iteration runtime, providing dense step-level credit with negligible overhead.


\paragraph{Expectation versus best-case aggregation.}
Proposition~\ref{prop:fidelity} relates the difference between GraphHCA and GraphGPO to their aggregation of successor values.
To isolate this factor within the same training pipeline, we replace the expectation in~\eqref{eq:phi-backup} with a $\hat q$-weighted power mean,
\begin{equation}
\hat\Phi_\omega(s)
=
\bar\gamma
\left(
\mathbb E_{a\sim\hat q}
\left[\hat\Phi_\omega(s'_a)^\omega\right]
\right)^{1/\omega},
\qquad
\omega\in[1,\infty),
\label{eq:mellowmax}
\end{equation}
while keeping the boundary conditions, numerical floor, and credit computation unchanged.
At $\omega=1$,~\eqref{eq:mellowmax} recovers the original expectation backup of GraphHCA; increasing $\omega$ progressively emphasizes high-valued successors, and $\omega\to\infty$ yields the max backup whose fixed point is $\bar\gamma^{d(s)}$ on goal-reachable states, which is ranking-equivalent to GraphGPO's shortest-path criterion.
As shown in Table~\ref{tab:mellowmax}, overall success decreases from $95.7\%$ at $\omega=1$ to $95.3\%$ at $\omega=5$, $91.8\%$ at $\omega=10$, and $90.1\%$ under max aggregation.
The max variant closely matches GraphGPO at $91.0\%$.
These results are consistent with Proposition~\ref{prop:fidelity} and indicate that retaining empirical transition frequencies through expectation provides more informative credit than reducing each state to its best observed continuation. Appendix~\ref{app:more_ablations} reports the more ablations.

\section{Conclusion}

We presented \textbf{GraphHCA}, a model-free framework that derives dense step-level credit from the HCA hindsight ratio and estimates it directly from pooled rollout graph. 
By reducing hindsight credit to state-wise success potentials, GraphHCA avoids auxiliary critics, hindsight models, and additional outcome-conditioned policy evaluations while remaining lightweight in practice. 
Extensive experiments on ALFWorld, WebShop, and Sokoban show consistent improvements over GRPO and existing step-level baselines across language and vision-language settings. 
Overall, GraphHCA provides a simple and effective way to extract finer-grained supervision from rollouts already collected for policy optimization.




\bibliography{iclr2027_conference}
\bibliographystyle{iclr2027_conference}

\newpage

\appendix
\section*{Appendix}

\section{Algrithom}
\label{app:algrithom}

\begin{algorithm}[ht]
\caption{Pseudo-code for GraphHCA}
\label{alg:graphhca}
\begin{algorithmic}[1]
\Require Initial policy $\pi_\theta$, task distribution $p(X)$,
propagation discount $\bar\gamma$, numerical floor $\epsilon$,
group size $N$, maximum horizon $T$

\For{each training step}
    \State $\theta_{\mathrm{old}} \gets \theta$

    \Statex \hspace{\algorithmicindent}\textit{// Multi-step rollout}
    \State Sample task $x\sim p(X)$
    \State Roll out $N$ trajectories under $\pi_{\theta_{\mathrm{old}}}$
    until termination or step $T$

    \Statex \hspace{\algorithmicindent}\textit{// Graph estimation}
    \State Construct the pooled rollout graph and empirical weights $\hat q(a\mid s)$
    \State Compute $\hat\Phi$ by iterating~\eqref{eq:phi-backup}
    \State Compute $\hat\Psi(s)=\log\max\{\hat\Phi(s),\epsilon\}$

    \Statex \hspace{\algorithmicindent}\textit{// Hindsight credit assignment}
    \State Compute $\hat\ell_t=\hat\Psi(s_{t+1})-\hat\Psi(s_t)$
    \State Compute same-state advantages $A_t^{\mathrm{s}}$ by~\eqref{eq:step-adv}

    \Statex \hspace{\algorithmicindent}\textit{// Policy optimization}
    \State trajectory-level advantages $A^{\mathrm{T}}$
    \State Compute $\hat A_t$ by~\eqref{eq:combined-adv}
    \State Update $\theta$ using the objective in~\eqref{eq:objective_1}
\EndFor

\end{algorithmic}
\end{algorithm}

Algorithm~\ref{alg:graphhca} summarizes the GraphHCA training procedure.
At each step, a group of rollouts is aggregated into a shared transition graph, from which the discounted graph value and transition-level hindsight credit are computed.
The resulting same-state step advantage is combined with the episode-level advantage for policy optimization.
GraphHCA requires neither a learned critic nor an auxiliary hindsight model or additional outcome-conditioned policy evaluation, as all step-level credit is derived from rollouts already collected for training.
The computational overhead of these graph operations is negligible compared with rollout generation and policy optimization, as shown in Figure~\ref{fig:runtime}.

\section{Proofs}
\label{app:proofs}

This appendix provides proofs for the four propositions in Sec.~\ref{sec:method}: the closed-form hindsight ratio (Sec.~\ref{app:credit}), the contraction and convergence of the graph backup (Sec.~\ref{app:contraction}), the hindsight KL identity and bounded graph credit (Sec.~\ref{app:kl-identity}), and the relation between expectation-based and best-case aggregation (Sec.~\ref{app:fidelity}). We first fix the notation shared across these proofs.

\paragraph{Exact quantities.}
Consider a terminal-goal task with deterministic transitions $s'_a=f(s,a)$ and terminal reward $R(\tau)=\mathbf 1[s_T\in\mathcal G]$. For the exact quantities, we adopt the state-sufficiency assumption in Sec.~\ref{sec:potential}. Trajectories are generated by the behavior policy $q$, and $G=\{s_T\in\mathcal G\}$ denotes the success event. For any state $s$, define
\begin{equation}
\Phi(s)=P(G\mid s),
\qquad
\Phi(s)=1\ \text{for }s\in\mathcal G,\quad
\Phi(s)=0\ \text{for }s\in\mathcal F.
\end{equation}
The corresponding log-scale success potential is
\begin{equation}
\Psi(s)=\log\max\{\Phi(s),\epsilon\},
\qquad \epsilon\in(0,1).
\end{equation}
We write $h(a\mid s,G)=P(a\mid s,G)$ for the success-conditioned hindsight distribution.

\paragraph{Empirical graph quantities.}
For empirical graph quantities, identical environment states are pooled as in Sec.~\ref{sec:graph}, and each observed action is weighted by its empirical frequency
\begin{equation}
\hat q(a\mid s)=\frac{n(s,a)}{n(s)}.
\end{equation}
Let $\mathcal V$ denote the set of graph nodes. We define the backup operator $\mathcal T:\mathbb R^{\mathcal V}\to\mathbb R^{\mathcal V}$ by
\begin{equation}
(\mathcal T\phi)(s)=
\begin{cases}
1, & s\in\mathcal G,\\
0, & s\in\mathcal F,\\
\bar\gamma\displaystyle\sum_a \hat q(a\mid s)\phi(s'_a), & \text{otherwise}.
\end{cases}
\label{eq:T-def}
\end{equation}
A fixed point of $\mathcal T$ therefore satisfies exactly the discounted graph recursion in~\eqref{eq:phi-backup} together with its terminal boundary conditions. We denote this fixed point by $\hat\Phi$ and define the corresponding graph success potential as $\hat\Psi(s)=\log\max\{\hat\Phi(s),\epsilon\}$.

\subsection{Closed-form hindsight ratio (Proposition~\ref{prop:closed-form})}
\label{app:credit}

Deterministic transitions identify the successor state, whose continuation success probability is independent of the preceding history and time step under the state-sufficiency assumption.

\begin{lemma}[Successor identity]
\label{lem:succ}
Under the assumptions of Proposition~\ref{prop:closed-form}, for any non-terminal state $s$ and action $a$ with $q(a\mid s)>0$,
\begin{equation}
P(G\mid s,a)=\Phi(s'_a).
\end{equation}
\end{lemma}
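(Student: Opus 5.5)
I would prove the identity by conditioning on the realized successor and then invoking the state-sufficiency assumption.

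\textbf{Step 1: reduce to the successor.} Fix a non-terminal $s$ and an action $a$ with $q(a\mid s)>0$. The positivity ensures that the event $\{s_t=s,a_t=a\}$ has positive probability, so $P(G\mid s,a)$ is well defined. Because transitions are deterministic, this event coincides with $\{s_t=s,a_t=a,s_{t+1}=s'_a\}$, where $s'_a=f(s,a)$. Hence
\begin{equation}
P(G\mid s,a)=P(G\mid s_t=s,a_t=a,s_{t+1}=s'_a).
\end{equation}
The right-hand side conditions on a history that ends at the state $s'_a$.

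\textbf{Step 2: drop the history.} The trajectory after time $t+1$ is generated by $q$ acting from $s'_a$, with the deterministic dynamics $f$. By the state-sufficiency assumption of Sec.~\ref{sec:potential}, the conditional probability of eventual success depends only on the current state. It therefore does not depend on the preceding history $(s,a)$ or on the time index $t+1$. Applying this gives
\begin{equation}
P(G\mid s_t=s,a_t=a,s_{t+1}=s'_a)=P(G\mid s'_a)=\Phi(s'_a).
\end{equation}

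\textbf{Step 3: handle terminal successors.} If $s'_a\in\mathcal G$ or $s'_a\in\mathcal F$, the episode stops at $s'_a$. Then $G$ holds exactly when $s'_a\in\mathcal G$, so $P(G\mid s,a)$ equals $1$ or $0$ respectively. This agrees with the boundary values of $\Phi$, so no separate argument is needed beyond checking these cases.

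\textbf{Main obstacle.} The only delicate point is Step 2. Merging states across different time steps is legitimate only because the assumption explicitly removes dependence on the time step; in a finite-horizon setting the remaining horizon would otherwise matter. I would therefore state clearly that the conditional law of the future given $s_{t+1}=s'_a$ is taken to be time-homogeneous, which is exactly what the assumption provides. The identity then follows directly.
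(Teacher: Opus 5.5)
Your proposal is correct and follows the paper's proof: determinism fixes the unique successor $s'_a=f(s,a)$, and the state-sufficiency assumption (independence from history and time step) then gives $P(G\mid s,a)=P(G\mid s'_a)=\Phi(s'_a)$. Your explicit check of terminal successors against the boundary values is a harmless addition that the paper leaves implicit; one small caveat is that $q(a\mid s)>0$ makes $\{s_t=s,a_t=a\}$ non-null only if $s$ is itself reached with positive probability at time $t$.
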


\begin{proof}
Under deterministic transitions, taking action $a$ at state $s$ leads to the unique successor $s'_a=f(s,a)$.
By the state-sufficiency assumption, all histories reaching $s'_a$ have the same conditional probability of eventual success under $q$, irrespective of the time step. Therefore,
\begin{equation}
P(G\mid s,a)=P(G\mid s'_a)=\Phi(s'_a).
\end{equation}
\end{proof}

\begin{proof}[Proof of Proposition~\ref{prop:closed-form}]
For any non-terminal state $s$, marginalizing over the first action gives
\begin{align}
\Phi(s)
&=P(G\mid s) \\
&=\sum_a q(a\mid s)P(G\mid s,a) \\
&=\sum_a q(a\mid s)\Phi(s'_a)
=\mathbb E_{a\sim q(\cdot\mid s)}\big[\Phi(s'_a)\big],
\end{align}
where the third equality follows from Lemma~\ref{lem:succ}. This proves ~\eqref{eq:phi-exact}.

For any state with $\Phi(s)>0$, Bayes' rule gives
\begin{equation}
h(a\mid s,G)=P(a\mid s,G)=\frac{q(a\mid s)P(G\mid s,a)}{P(G\mid s)}.
\end{equation}
For actions in the support of $q(\cdot\mid s)$, dividing by $q(a\mid s)$ and applying Lemma~\ref{lem:succ} yields
\begin{equation}
\rho(s,a)=\frac{P(a\mid s,G)}{q(a\mid s)}
=\frac{P(G\mid s,a)}{P(G\mid s)}
=\frac{\Phi(s'_a)}{\Phi(s)},
\end{equation}
which proves ~\eqref{eq:rho-closed}.
\end{proof}

The corresponding log-scale credit is $\ell(s,a)=\Psi(s'_a)-\Psi(s)$. Whenever the floor in~\eqref{eq:psi-def} is inactive at both states, ~\eqref{eq:rho-closed} gives
\begin{equation}
\ell(s,a)=\log\rho(s,a).
\end{equation}
Independently of whether the floor is active, the potential differences telescope along any realized trajectory:
\begin{equation}
\sum_{t=0}^{T-1}\ell(s_t,a_t)
=
\sum_{t=0}^{T-1}\big[\Psi(s_{t+1})-\Psi(s_t)\big]
=
\Psi(s_T)-\Psi(s_0).
\end{equation}

\subsection{Contraction and convergence (Proposition~\ref{prop:contraction})}
\label{app:contraction}

\begin{proof}[Proof of Proposition~\ref{prop:contraction}]
Let $\phi_1,\phi_2\in\mathbb R^{\mathcal V}$. For terminal states, $\mathcal T\phi_1=\mathcal T\phi_2$ by definition. For any non-terminal state $s$,
\begin{align}
\big|(\mathcal T\phi_1)(s)-(\mathcal T\phi_2)(s)\big|
&=
\bar\gamma\left|\sum_a \hat q(a\mid s)\big[\phi_1(s'_a)-\phi_2(s'_a)\big]\right| \\
&\le
\bar\gamma\sum_a \hat q(a\mid s)\big|\phi_1(s'_a)-\phi_2(s'_a)\big| \\
&\le
\bar\gamma\|\phi_1-\phi_2\|_\infty.
\end{align}
Taking the supremum over $s\in\mathcal V$ gives
\begin{equation}
\|\mathcal T\phi_1-\mathcal T\phi_2\|_\infty
\le
\bar\gamma\|\phi_1-\phi_2\|_\infty.
\end{equation}
Since $\bar\gamma\in(0,1)$, $\mathcal T$ is a contraction on the complete metric space $(\mathbb R^{\mathcal V},\|\cdot\|_\infty)$. Banach's fixed-point theorem therefore guarantees a unique fixed point $\hat\Phi$, and for any initialization $\phi_0$,
\begin{equation}
\|\mathcal T^k\phi_0-\hat\Phi\|_\infty
\le
\bar\gamma^k\|\phi_0-\hat\Phi\|_\infty,
\end{equation}
which establishes geometric convergence, independently of whether the graph contains cycles.

It remains to identify the fixed point. Let $\kappa=\inf\{t\ge0:s_t\in\mathcal G\cup\mathcal F\}$ be the termination time of a graph rollout generated by $a_t\sim\hat q(\cdot\mid s_t)$, and define
\begin{equation}
\tilde\Phi(s)
=
\mathbb E_{\hat q}\!\left[
\bar\gamma^{\,\kappa}
\mathbf 1\{s_\kappa\in\mathcal G\}
\mathbf 1\{\kappa<\infty\}
\,\middle|\,s_0=s
\right].
\end{equation}
The terminal boundary gives $\tilde\Phi=1$ on $\mathcal G$ and $\tilde\Phi=0$ on $\mathcal F$. For any non-terminal $s$, conditioning on the first action yields
\begin{equation}
\tilde\Phi(s)
=
\bar\gamma\sum_a \hat q(a\mid s)\tilde\Phi(s'_a),
\end{equation}
so $\mathcal T\tilde\Phi=\tilde\Phi$. By uniqueness of the fixed point, $\tilde\Phi=\hat\Phi$. Hence, $\hat\Phi$ is exactly the $\bar\gamma$-discounted expected terminal success under the empirical graph policy $\hat q$.
\end{proof}

\subsection{Hindsight KL identity and bounded step credit (Proposition~\ref{prop:kl})}
\label{app:kl-identity}

\begin{proof}[Proof of Proposition~\ref{prop:kl}]
Fix a state satisfying the conditions of Proposition~\ref{prop:kl}.
The floor in~\eqref{eq:psi-def} is inactive at $s$ and at every successor reached with positive probability under $q$.
For every action in the support of $q(\cdot\mid s)$, Proposition~\ref{prop:closed-form} gives
\begin{equation}
\ell(s,a)
=
\log\frac{h(a\mid s,G)}{q(a\mid s)}.
\end{equation}
Taking expectation under $q(\cdot\mid s)$ yields
\begin{align}
\mathbb E_{a\sim q(\cdot\mid s)}[\ell(s,a)]
&=
\sum_{a:q(a\mid s)>0} q(a\mid s)
\log\frac{h(a\mid s,G)}{q(a\mid s)} \\
&=
-\mathrm{KL}\!\left(
q(\cdot\mid s)\,\|\,h(\cdot\mid s,G)
\right)
\le 0,
\end{align}
where equality holds if and only if $q(\cdot\mid s)=h(\cdot\mid s,G)$ almost everywhere.

Independently, for the empirical graph, the terminal boundary and the discounted backup imply $0\le\hat\Phi(s)\le1$ for every graph node. Whether or not the floor is active, it gives
\begin{equation}
\hat\Psi(s)=\log\max\{\hat\Phi(s),\epsilon\}
\in[\log\epsilon,0].
\end{equation}
Therefore, for every observed transition,
\begin{equation}
|\hat\ell(s,a)|
=
|\hat\Psi(s'_a)-\hat\Psi(s)|
\le
\log(1/\epsilon),
\end{equation}
which proves the boundedness of the graph credit.
\end{proof}

\subsection{Expectation versus best-case aggregation (Proposition~\ref{prop:fidelity})}
\label{app:fidelity}

Under the unit-cost convention of~\citet{graphgpo}, define the max-aggregation counterpart of $\mathcal T$ by
\begin{equation}
(\mathcal T^{\max}\phi)(s)=
\begin{cases}
1, & s\in\mathcal G,\\
0, & s\in\mathcal F,\\
\bar\gamma\displaystyle\max_{a:n(s,a)>0}\phi(s'_a), & \text{otherwise}.
\end{cases}
\label{eq:T-max}
\end{equation}
For any $\phi_1,\phi_2\in\mathbb R^{\mathcal V}$, the terminal values cancel, while for any non-terminal state $s$,
\begin{equation}
\big|(\mathcal T^{\max}\phi_1)(s)-(\mathcal T^{\max}\phi_2)(s)\big|
\le
\bar\gamma\max_{a:n(s,a)>0}
\big|\phi_1(s'_a)-\phi_2(s'_a)\big|
\le
\bar\gamma\|\phi_1-\phi_2\|_\infty.
\end{equation}
Hence, $\mathcal T^{\max}$ is a $\bar\gamma$-contraction in the sup norm and therefore admits a unique fixed point, denoted $\Phi^{\max}$.

\begin{proof}[Proof of Proposition~\ref{prop:fidelity}]
For any $\phi$ and non-terminal state $s$,
\begin{equation}
(\mathcal T\phi)(s)
=
\bar\gamma\sum_a\hat q(a\mid s)\phi(s'_a)
\le
\bar\gamma\max_{a:n(s,a)>0}\phi(s'_a)
=
(\mathcal T^{\max}\phi)(s).
\end{equation}
Both operators are monotone. Starting from the same initialization therefore gives $\mathcal T^k\phi_0\le(\mathcal T^{\max})^k\phi_0$ for every $k$, and taking $k\to\infty$ yields
\begin{equation}
\hat\Phi(s)\le\Phi^{\max}(s).
\end{equation}

It remains to identify $\Phi^{\max}$. Let $d(s)$ denote the shortest-path distance from $s$ to $\mathcal G$ on the empirical graph, with $d(s)=0$ for $s\in\mathcal G$. For every goal-reachable non-terminal state,
\begin{equation}
d(s)=1+\min_{a:n(s,a)>0}d(s'_a).
\end{equation}
Define $v(s)=\bar\gamma^{\,d(s)}$ for goal-reachable states and $v(s)=0$ otherwise. The terminal conditions give $v=1$ on $\mathcal G$ and $v=0$ on $\mathcal F$. For every goal-reachable non-terminal state,
\begin{align}
(\mathcal T^{\max}v)(s)
&=
\bar\gamma\max_{a:n(s,a)>0}\bar\gamma^{\,d(s'_a)} \\
&=
\bar\gamma^{\,1+\min_{a:n(s,a)>0}d(s'_a)} \\
&=
\bar\gamma^{\,d(s)}
=
v(s),
\end{align}
where the second equality follows from $\bar\gamma\in(0,1)$. If $s$ is not goal-reachable, none of its observed successors is goal-reachable, so $(\mathcal T^{\max}v)(s)=0=v(s)$. Thus, $v$ is a fixed point of $\mathcal T^{\max}$, and uniqueness implies
\begin{equation}
\Phi^{\max}(s)=\bar\gamma^{\,d(s)}
\qquad
\text{for every goal-reachable }s.
\end{equation}

To characterize equality, define $\Delta(s)=\Phi^{\max}(s)-\hat\Phi(s)\ge0$. For any non-terminal state,
\begin{align}
\Delta(s)
&=
\bar\gamma\left[
\max_a\Phi^{\max}(s'_a)
-
\sum_a\hat q(a\mid s)\hat\Phi(s'_a)
\right] \\
&=
\bar\gamma\left[
\max_a\Phi^{\max}(s'_a)
-
\sum_a\hat q(a\mid s)\Phi^{\max}(s'_a)
\right]
+
\bar\gamma\sum_a\hat q(a\mid s)\Delta(s'_a).
\end{align}
Both terms are non-negative. Hence, $\Delta(s)=0$ if and only if $\hat q(\cdot\mid s)$ is supported on maximizing successors and $\Delta(s'_a)=0$ for every successor reached with positive probability. Recursively applying the same argument gives the stated equality condition at $s$ and along its $\hat q$-reachable continuation.

Finally, for a fixed state $s$, $\hat\Psi(s)$ is independent of the action, so $\hat\ell(s,a)=\hat\Psi(s'_a)-\hat\Psi(s)$ orders the observed actions by their successor potential $\hat\Psi(s'_a)$. Under unit transition cost, GraphGPO uses
\begin{equation}
R^{\mathrm G}(s,a,s'_a)
\propto
\bar\gamma^{\,d(s'_a)+1},
\end{equation}
which favors smaller $d(s'_a)$ because $\bar\gamma\in(0,1)$. Since $\Phi^{\max}(s'_a)=\bar\gamma^{\,d(s'_a)}$, GraphGPO's shortest-path score is ranking-equivalent to the max-aggregation counterpart, whereas GraphHCA evaluates successors through the empirical expectation under $\hat q$. This establishes the expectation-versus-best-case distinction.
\end{proof}

\section{Benchmark Details And Compared Methods}
\label{app:benchmarks}

\paragraph{ALFWorld.}
ALFWorld~\citep{shridhar2020alfworld} aligns the ALFRED household task suite with an interactive text simulator, so the agent must carry out long-horizon, multi-step decision making from textual observations alone.
It contains 3,827 task instances spanning six categories of common household activity, namely Pick, Clean, Cool, Look, Heat, and Pick2, which differ in how many subgoals a task requires and therefore in how long the chain of decisions to be credited is.
At each turn the agent receives a text description of the room together with its inventory, and emits a template action for navigation, object manipulation, or interaction with a receptacle.
Reward is granted only once the full instruction is satisfied, and we report the success rate on each of the six categories as well as the overall success rate.

\paragraph{WebShop.}
WebShop~\citep{yao2022webshop} is a large-scale web-based interactive environment that places the agent in a realistic online shopping scenario over more than 1.1 million real product pages and approximately 12,000 human-written user instructions.
Given an instruction that names a target product with its desired attributes and a price constraint, the agent issues search queries, browses result pages, opens candidate products, selects attribute options, and finally purchases one item.
An episode is scored by how well the purchased item matches the requested attributes, and it counts as a success when that score is maximal, so we report the average task score alongside the average success rate.

\paragraph{Sokoban.}
Sokoban~\citep{SchraderSokoban2018} is an interactive puzzle game in which the agent pushes boxes onto target squares of a grid.
We adopt the $6\times6$ configuration and present each state as a rendered image, so the policy is a vision-language model that grounds its plan in pixels rather than in text.
The game is a stringent test of multi-step planning because pushes are irreversible: a box driven against a wall or into a corner can make the level unsolvable, and the agent receives no signal until the episode ends.
We report the average success rate.

\paragraph{Compared Methods.}
Under an identical agent framework and protocol, we compare GraphHCA against \emph{closed-source LLMs} (GPT-4o~\citep{openai2024gpt4technicalreport}, Gemini-2.5-Pro~\citep{geminiteam2025geminifamilyhighlycapable}), \emph{prompting-based agents} (the base Qwen2.5 checkpoint, ReAct~\citep{yao2023react}, Reflexion~\citep{shinn2024reflexion}), \emph{critic-based RL} (PPO~\citep{schulman2017proximal}), and \emph{critic-free group-based RL} (RLOO~\citep{kool2019buy,ahmadian2024back}, GRPO~\citep{grpo}, EMPG~\citep{empg}, GiGPO~\citep{gigpo}, HCAPO~\citep{hcapo}, GraphGPO~\citep{graphgpo}).
The last family is the closest comparison, as GiGPO, HCAPO, and GraphGPO differ from GraphHCA only in how step-level credit is obtained.
On Sokoban we compare against its group-based members GRPO, GiGPO, and GraphGPO.

\section{Implementation Details and Hyperparameter Sensitivity}
\label{app:hyper}

\subsection{Training configuration}
\label{app:hyper-config}

All experiments are conducted for 150 training steps across all benchmarks and model configurations. For experiments using Qwen2.5-1.5B-Instruct and Qwen2.5-VL-3B-Instruct, training is performed on 4 NVIDIA A100 GPUs with 80\,GB memory each. For experiments using the larger Qwen2.5-7B-Instruct model, we employ 8 NVIDIA A100 GPUs with 80\,GB memory each to accommodate the increased computational and memory requirements.

\subsection{More Ablations}
\label{app:more_ablations}

\begin{table}[t]
\centering
\small
\caption{ALFWorld ablation on the composition of the combined advantage of \eqref{eq:combined-adv} with Qwen2.5-1.5B-Instruct: per-subtask and overall success rate (\%) over three random seeds. The upper block removes one of the two terms, where $w_{\mathrm s}\!=\!0$ recovers GRPO exactly (reproduced from Table~\ref{tab:alfworld_webshop}). The lower block keeps both and varies their relative weight around the default $w_{\mathrm s}\!=\!1$ (highlighted). The per-column best is in \textbf{bold}.}
\label{tab:advantage}
\setlength{\tabcolsep}{1.5mm}
\renewcommand{\arraystretch}{1.1}
\begin{tabular}{l|ccccccc}
\toprule
\textbf{Advantage} & \textbf{Pick} & \textbf{Clean} & \textbf{Cool} & \textbf{Look} & \textbf{Heat} & \textbf{Pick2} & \textbf{All} \\
\midrule
$A^{\mathrm{T}}$ only ($w_{\mathrm s}\!=\!0$, GRPO)
& 88.2{\std{2.9}} & 72.6{\std{15.9}} & 67.5{\std{7.5}}
& 43.3{\std{18.3}} & 76.6{\std{0.4}} & 51.6{\std{9.5}}
& 71.1{\std{1.6}} \\
$A^{\mathrm{step}}$ only
& 97.1{\std{0.05}} & 
\textbf{100.0{\std{0.0}}} & 82.5{\std{2.5}}
& 81.25{\std{6.3}} & \textbf{97.6{\std{2.4}}} & 84.2{\std{5.3}}
& 92.5{\std{0.4}} \\
\midrule
$w_{\mathrm s}=0.5$
& \textbf{100.0{\std{0.0}}} & \textbf{100.0{\std{0.0}}} & 82.5{\std{2.5}}
& 68.8{\std{6.3}} & 95.2{\std{0.0}} & 92.1{\std{2.6}}
& 93.4{\std{0.5}} \\
$w_{\mathrm s}=0.7$
& \textbf{100.0{\std{0.0}}} & \textbf{100.0{\std{0.0}}} & \textbf{95.0{\std{5.0}}}
& 77.1{\std{10.4}} & 92.9{\std{7.2}} & 93.8{\std{0.9}}
& 94.9{\std{0.4}} \\
\rowcolor{gray!20} $w_{\mathrm s}=1$ (default)
& \textbf{100.0{\std{0.0}}} & \textbf{100.0{\std{0.0}}} & 85.7{\std{0.7}}
& \textbf{89.9{\std{2.4}}} & 96.7{\std{3.3}} & \textbf{95.8{\std{4.2}}}
& \textbf{95.7{\std{1.2}}} \\
\bottomrule
\end{tabular}
\end{table}

\paragraph{Composition of the two advantage terms.}
Equation~\eqref{eq:combined-adv} combines the trajectory-level advantage $A^{\mathrm{T}}$ with the state-level advantage $A^{\mathrm{s}}$. Table~\ref{tab:advantage} examines their complementarity by removing each term in turn and varying the weight $w_{\mathrm s}$. 
Setting $w_{\mathrm s}=0$ recovers GRPO and reduces overall success from $95.7\%$ to $71.1\%$, a drop of $24.6$ points. 
Using $A^{\mathrm{s}}$ alone achieves $92.5\%$, but remains $3.2$ points below the full objective, consistent with the role of $A^{\mathrm{T}}$ in providing trajectory-level supervision, including at singleton states where $A^{\mathrm{s}}=0$. 
Among the mixed settings, performance increases from $93.4\%$ at $w_{\mathrm s}=0.5$ to $94.9\%$ at $w_{\mathrm s}=0.7$ and peaks at $95.7\%$ with the default $w_{\mathrm s}=1$. 
These results support the complementary roles of the two advantage terms rather than either term subsuming the other.

\begin{table}[t]
\centering
\small
\caption{ALFWorld ablation over the propagation discount $\bar\gamma$ of \eqref{eq:phi-backup} with Qwen2.5-1.5B-Instruct: per-subtask and overall success rate (\%) over three random seeds. The default $\bar\gamma\!=\!0.95$ is highlighted. Small $\bar\gamma$ drives the potential toward the distance surrogate $\bar\gamma^{\,d(s)}$ of \eqref{eq:fidelity}. The per-column best is in \textbf{bold}.}
\label{tab:gamma}
\setlength{\tabcolsep}{2.4mm}
\renewcommand{\arraystretch}{1.1}
\begin{tabular}{l|ccccccc}
\toprule
$\bar\gamma$ & \textbf{Pick} & \textbf{Clean} & \textbf{Cool} & \textbf{Look} & \textbf{Heat} & \textbf{Pick2} & \textbf{All} \\
\midrule
$0.55$
& 96.4{\std{3.6}} & 95.5{\std{4.6}} & 85.8{\std{0.8}}
& 79.2{\std{4.2}} & \textbf{97.6{\std{2.4}}} & 90.2{\std{4.5}}
& 90.7{\std{3.2}} \\
$0.75$
& 98.6{\std{1.5}} & \textbf{100.0{\std{0.0}}} & \textbf{87.5{\std{2.5}}}
& 87.5{\std{0.0}} & \textbf{97.6{\std{2.4}}} & 92.1{\std{2.7}}
& 93.3{\std{1.6}} \\
\rowcolor{gray!20} $0.95$ (default)
& \textbf{100.0{\std{0.0}}} & \textbf{100.0{\std{0.0}}} & 85.7{\std{0.7}}
& \textbf{89.9{\std{2.4}}} & 96.7{\std{3.3}} & \textbf{95.8{\std{4.2}}}
& \textbf{95.7{\std{1.2}}} \\
\bottomrule
\end{tabular}
\end{table}

\paragraph{Effect of the propagation discount.}
A small propagation discount $\bar\gamma$ in~\eqref{eq:phi-backup} can weaken credit assignment through two related mechanisms.
First, stronger discounting gives greater relative weight to short successful continuations, allowing path length to outweigh differences in success probability.
Even before clipping, the potential can therefore favor a shorter but less reliable route over a longer, more reliable one.
Second, discounting interacts with the floor $\epsilon=10^{-2}$ in $\hat\Psi(s)=\log\max\{\hat\Phi(s),\epsilon\}$.
By~\eqref{eq:fidelity}, $\hat\Phi(s)\le\bar\gamma^{d(s)}$, so states with $d(s)>\log\epsilon/\log\bar\gamma$ are necessarily clipped to $\log\epsilon$.
This occurs from distances of 8 and 17 steps for $\bar\gamma=0.55$ and $0.75$, respectively, well within the rollout horizon $T=50$, erasing potential differences between clipped states.
For $\bar\gamma=0.95$, the discount factor remains above the floor over $T$ steps ($\bar\gamma^{T}\approx0.077>\epsilon$), although low success probabilities can still trigger clipping.
Table~\ref{tab:gamma} is consistent with these mechanisms: overall success rises from $90.7\%$ to $93.3\%$ and $95.7\%$ as $\bar\gamma$ increases from $0.55$ to $0.75$ and $0.95$.
We therefore set $\bar\gamma=0.95$, which reduces distance bias and clipping caused by propagation depth while retaining the contraction guarantee of Proposition~\ref{prop:contraction}.
\begin{table}[t]
\centering
\small
\caption{ALFWorld ablation over the failure floor $\epsilon$ of \eqref{eq:credit-graph} with Qwen2.5-1.5B-Instruct: per-subtask and overall success rate (\%) over three random seeds. The default $\epsilon\!=\!10^{-2}$ is highlighted. The per-column best is in \textbf{bold}.}
\label{tab:floor}
\setlength{\tabcolsep}{2.4mm}
\renewcommand{\arraystretch}{1.1}
\begin{tabular}{l|ccccccc}
\toprule
$\epsilon$ & \textbf{Pick} & \textbf{Clean} & \textbf{Cool} & \textbf{Look} & \textbf{Heat} & \textbf{Pick2} & \textbf{All} \\
\midrule
$10^{-6}$
& \textbf{100.0{\std{0.0}}} & \textbf{100.0{\std{0.0}}} & \textbf{90.0{\std{0.0}}}
& 81.3{\std{6.3}} & 95.3{\std{4.8}} & 89.5{\std{5.3}}
& 94.9{\std{2.0}} \\
$10^{-4}$
& 98.6{\std{1.5}} & \textbf{100.0{\std{0.0}}} & \textbf{90.0{\std{10.0}}}
& 87.5{\std{0.0}} & \textbf{97.6{\std{2.4}}} & 92.1{\std{2.6}}
& 94.1{\std{1.2}} \\
\rowcolor{gray!20} $10^{-2}$ (default)
& \textbf{100.0{\std{0.0}}} & \textbf{100.0{\std{0.0}}} & 85.7{\std{0.7}}
& \textbf{89.9{\std{2.4}}} & 96.7{\std{3.3}} & \textbf{95.8{\std{4.2}}}
& \textbf{95.7{\std{1.2}}} \\
\bottomrule
\end{tabular}
\end{table}

\paragraph{Sensitivity to the numerical floor.}
The floor $\epsilon$ keeps $\hat\Psi(s)=\log\max\{\hat\Phi(s),\epsilon\}$ finite when $\hat\Phi(s)$ approaches zero.
Under the default $\bar\gamma=0.95$ and $T=50$, the attenuation over the full horizon is $\bar\gamma^{T}\approx0.077$, which exceeds every tested floor.
The floor therefore never clips a state because of its propagation depth, and it binds only states whose continuations rarely or never reach the goal.
Changing $\epsilon$ thus shifts the potential of these failure states, while the potential of every state with $\hat\Phi(s)\ge10^{-2}$ remains unchanged.
Consistent with this analysis, Table~\ref{tab:floor} reports overall success of $95.7\%$, $94.1\%$, and $94.9\%$ for $\epsilon=10^{-2}$, $10^{-4}$, and $10^{-6}$, a variation of at most $1.6$ points that is comparable to the standard deviation across seeds.
These results indicate that $\epsilon$ acts as a numerical safeguard rather than a sensitive hyperparameter, provided that it stays below $\bar\gamma^{T}$.

\section{Prompt Templates}
\label{app:templete}

Figures~\ref{fig:alfworld-prompt}, \ref{fig:webshop-prompt}, and~\ref{fig:sokoban-prompt} present the prompt template and an illustrative filled example for ALFWorld, WebShop, and Sokoban, respectively.
All three instantiate a common schema: the agent is given the task instruction, the step count, at most the two most recent observation-action pairs, and the current observation with its admissible actions, and is asked to reason before committing to a single action.
They diverge only where the environment demands it, Sokoban adding a symbol legend and a warning that pushes are irreversible, and offering a fixed set of four moves rather than a state-dependent action set.
Gold text denotes template placeholders, while green and magenta mark the reasoning and action delimiters, respectively.

\begin{figure}[!htbp]
\centering
\begin{prompttpl}{Prompt Template for ALFWorld}
You are an expert agent operating in the ALFRED Embodied Environment.
Your task is to: \promptvar{task\_description}.
Prior to this step, you have already taken \promptvar{step\_count} step(s).
Below are the most recent \promptvar{history\_length} observations and the corresponding actions you took: \promptvar{action\_history}.
You are now at step \promptvar{current\_step} and your current observation is: \promptvar{current\_observation}.
Your admissible actions of the current situation are: [\promptvar{admissible\_actions}].
\par\smallskip
Now it's your turn to take an action. You should first reason step-by-step about the current situation.
This reasoning process MUST be enclosed within \promptthinktags\ tags.
Once you've finished your reasoning, you should choose an admissible action for current step and present it within \promptactiontags\ tags.
\end{prompttpl}

\vspace{8pt}
\begin{prompttpl}[fontupper=\small\rmfamily]{ALFWorld: Filled Example}
You are an expert agent operating in the ALFRED Embodied Environment.
Your task is to: \textbf{put a clean egg in microwave}.
Prior to this step, you have already taken \textbf{3} step(s).
Below are the most recent \textbf{2} observations and the corresponding actions you took:
\par\smallskip
\noindent [t=2] \textit{obs}: The fridge 1 is closed.\\
\noindent [t=2] \textit{act}: open fridge 1\\
\noindent [t=3] \textit{obs}: You open the fridge 1. In it you see a egg 1, a lettuce 1, a mug 1, and a tomato 2.\\
\noindent [t=3] \textit{act}: take egg 1 from fridge 1
\par\smallskip
You are now at step \textbf{4} and your current observation is:
You pick up the egg 1 from the fridge 1.
Your admissible actions of the current situation are:
[go to sinkbasin 1, go to microwave 1, go to countertop 1, put egg 1 in/on fridge 1, examine egg 1].
\par\smallskip
Now it's your turn to take an action. You should first reason step-by-step about the current situation.
This reasoning process MUST be enclosed within \promptthinktags\ tags.
Once you've finished your reasoning, you should choose an admissible action for current step and present it within \promptactiontags\ tags.
\end{prompttpl}
\caption{Prompt template and illustrative filled example for ALFWorld.}
\label{fig:alfworld-prompt}
\end{figure}

\begin{figure}[!htbp]
\centering
\begin{prompttpl}{Prompt Template for WebShop}
You are an expert autonomous agent operating in the WebShop e-commerce environment.
Your task is to: \promptvar{task\_description}.
Prior to this step, you have already taken \promptvar{step\_count} step(s).
Below are the most recent \promptvar{history\_length} observations and the corresponding actions you took: \promptvar{action\_history}.
You are now at step \promptvar{current\_step} and your current observation is: \promptvar{current\_observation}.
Your admissible actions of the current situation are: [\promptvar{available\_actions}].
\par\smallskip
Now it's your turn to take one action for the current step.
You should first reason step-by-step about the current situation, then think carefully which admissible action best advances the shopping goal.
This reasoning process MUST be enclosed within \promptthinktags\ tags.
Once you've finished your reasoning, you should choose an admissible action for current step and present it within \promptactiontags\ tags.
\end{prompttpl}

\vspace{8pt}
\begin{prompttpl}[fontupper=\small\rmfamily]{WebShop: Filled Example}
You are an expert autonomous agent operating in the WebShop e-commerce environment.
Your task is to: \textbf{i want a pre-seasoned 8-inch cast iron skillet made of carbon steel with a heat-resistant handle, and price lower than 40.00 dollars}.
Prior to this step, you have already taken \textbf{2} step(s).
Below are the most recent \textbf{2} observations and the corresponding actions you took:
\par\smallskip
\noindent [t=1] \textit{obs}: WebShop [SEP] Instruction: i want a pre-seasoned 8-inch cast iron skillet ... [SEP] Search\\
\noindent [t=1] \textit{act}: search[pre-seasoned 8 inch cast iron skillet carbon steel]\\
\noindent [t=2] \textit{obs}: [Page 1 (Total results: 50)] [SEP] B07XQK6WZP [SEP] Pre-Seasoned Cast Iron Skillet 8-Inch with Heat-Resistant Silicone Handle [SEP] \$29.99 [SEP] B08D8Q2JG9 [SEP] Lodge L5SK3 8-Inch Pre-Seasoned Cast-Iron Skillet [SEP] \$19.90 [SEP] ...\\
\noindent [t=2] \textit{act}: click[B07XQK6WZP]
\par\smallskip
You are now at step \textbf{3} and your current observation is:
Pre-Seasoned Cast Iron Skillet 8-Inch with Heat-Resistant Silicone Handle [SEP] Price: \$29.99 [SEP] Size: [7-inch] [8-inch] [10-inch] [SEP] Color: [black] [red] [SEP] Description: Heavy-duty cast iron, oven-safe, silicone sleeve included. [SEP] Buy Now.
Your admissible actions of the current situation are:
[click[8-inch], click[7-inch], click[10-inch], click[black], click[red], click[Description], click[Features], click[Reviews], click[Buy Now], click[\textless\ Prev]].
\par\smallskip
Now it's your turn to take one action for the current step.
You should first reason step-by-step about the current situation, then think carefully which admissible action best advances the shopping goal.
This reasoning process MUST be enclosed within \promptthinktags\ tags.
Once you've finished your reasoning, you should choose an admissible action for current step and present it within \promptactiontags\ tags.
\end{prompttpl}
\caption{Prompt template and illustrative filled example for WebShop.}
\label{fig:webshop-prompt}
\end{figure}

\begin{figure}[!htbp]
\centering
\begin{prompttpl}{Prompt Template for Sokoban}
You are an expert agent operating in the Sokoban environment.
Your goal is to push every box onto a target square, and the level is solved once no box is left off a target.
\par\smallskip
The grid is drawn with the following symbols:
\begin{itemize}[nosep,leftmargin=1.4em]
\item \texttt{\#} a wall, which neither you nor a box can enter.
\item \texttt{\_} an empty floor square, which you can walk on and push a box onto.
\item \texttt{O} a target square, on which a box has to be placed.
\item \texttt{X} a box.
\item \texttt{P} you, the player.
\item $\surd$ a box that already rests on a target.
\item \texttt{S} you standing on a target.
\end{itemize}
\par\smallskip
Boxes can only be pushed and never pulled, so a box driven into a corner or flattened against a wall may become impossible to move again and can leave the level unsolvable.
Plan ahead and keep every box on a square from which it can still reach a target.
\par\smallskip
Prior to this step, you have already taken \promptvar{step\_count} step(s).
Below are the most recent \promptvar{history\_length} observations and the corresponding actions you took: \promptvar{action\_history}.
You are now at step \promptvar{current\_step} and your current observation is: \promptvar{current\_observation}.
Your admissible actions are: [``up'', ``down'', ``left'', ``right''].
\par\smallskip
Now it's your turn to take one action for the current step.
You should first reason step-by-step about the current situation, locating the boxes and the targets, planning a path that pushes one box toward a target, and checking that the push does not trap that box.
This reasoning process MUST be enclosed within \promptthinktags\ tags.
Once you've finished your reasoning, you should choose an admissible action for current step and present it within \promptactiontags\ tags.
\end{prompttpl}

\vspace{8pt}
\begin{prompttpl}[fontupper=\small\rmfamily]{Sokoban: Filled Example}
You are an expert agent operating in the Sokoban environment.
\textit{(The symbol legend and the rules repeat the template above and are omitted here.)}
\par\smallskip
Prior to this step, you have already taken \textbf{2} step(s).
Below are the most recent \textbf{2} observations and the corresponding actions you took:
\par\smallskip
\noindent
\begin{minipage}[t]{0.48\linewidth}
\centering
[t=1] \textit{obs}\\[2pt]
{\ttfamily\footnotesize\baselineskip=9.5pt
\# \# \# \# \# \#\\
\# \_ \_ O \_ \#\\
\# \_ \_ X \_ \#\\
\# \_ \_ P \_ \#\\
\# \_ X \_ O \#\\
\# \# \# \# \# \#\par}
\vspace{2pt}
[t=1] \textit{act}: up
\end{minipage}%
\hfill
\begin{minipage}[t]{0.48\linewidth}
\centering
[t=2] \textit{obs}\\[2pt]
{\ttfamily\footnotesize\baselineskip=9.5pt
\# \# \# \# \# \#\\
\# \_ \_ $\surd$ \_ \#\\
\# \_ \_ P \_ \#\\
\# \_ \_ \_ \_ \#\\
\# \_ X \_ O \#\\
\# \# \# \# \# \#\par}
\vspace{2pt}
[t=2] \textit{act}: left
\end{minipage}
\par\smallskip
You are now at step \textbf{3} and your current observation is:\par
{\ttfamily\footnotesize\leftskip=1.5em\baselineskip=9.5pt\noindent
\# \# \# \# \# \#\\
\# \_ \_ $\surd$ \_ \#\\
\# \_ P \_ \_ \#\\
\# \_ \_ \_ \_ \#\\
\# \_ X \_ O \#\\
\# \# \# \# \# \#\par}
Your admissible actions are: [``up'', ``down'', ``left'', ``right''].
\par\smallskip
Now it's your turn to take one action for the current step.
You should first reason step-by-step about the current situation, locating the boxes and the targets, planning a path that pushes one box toward a target, and checking that the push does not trap that box.
This reasoning process MUST be enclosed within \promptthinktags\ tags.
Once you've finished your reasoning, you should choose an admissible action for current step and present it within \promptactiontags\ tags.
\end{prompttpl}
\caption{Prompt template and illustrative filled example for Sokoban.}
\label{fig:sokoban-prompt}
\end{figure}

\end{document}